%% file: main.tex
\documentclass[11pt]{article}

\usepackage[utf8]{inputenc}
\usepackage[T1]{fontenc}
\usepackage{lmodern}
\usepackage[margin=1in]{geometry}
\usepackage{amsmath,amssymb,amsthm,mathtools}
\usepackage{booktabs,multirow,array}
\usepackage{graphicx}
\usepackage{subcaption}
\usepackage{xcolor}
\usepackage{algorithm}
\usepackage{algpseudocode}
\usepackage{tikz}
\usetikzlibrary{positioning,fit,calc,arrows.meta,backgrounds}
\usepackage[numbers,sort&compress]{natbib}
\usepackage[colorlinks=true,linkcolor=blue!60!black,citecolor=blue!60!black,urlcolor=blue!60!black]{hyperref}
\usepackage[capitalise,noabbrev]{cleveref}
\makeatletter
\providecommand*{\theHALG@line}{}
\renewcommand*{\theHALG@line}{\thealgorithm.\arabic{ALG@line}}
\makeatother
\usepackage{enumitem}
\usepackage{microtype}
\usepackage[section]{placeins}
\graphicspath{{figures/}}

\newtheorem{proposition}{Proposition}

\theoremstyle{definition}

\theoremstyle{remark}

\newcommand{\R}{\mathbb{R}}
\newcommand{\norm}[1]{\left\lVert #1 \right\rVert}
\newcommand{\ip}[2]{\left\langle #1, #2 \right\rangle}
\newcommand{\Tz}{T^{0}}
\newcommand{\Dempty}{\Delta_{\mathrm{empty}}}
\newcommand{\Ddonor}{\Delta_{\mathrm{donor}}}

\title{Delta N-gram: Forward-Only Continual Writing\\
into a Language Model's Native N-gram Memory}

\author{
  Hao Wu \\
  Illinois Institute of Technology \\
  \texttt{hwu71@hawk.illinoistech.edu}
}

\date{}

\begin{document}
\maketitle

\input{sections/00_abstract}
\input{sections/01_introduction}
\input{sections/02_related}
\input{sections/03_method}
\input{sections/04_analysis}
\input{sections/05_setup}
\input{sections/06_memory}
\input{sections/07_longchain}
\input{sections/08_rows}
\input{sections/09_discussion}
\input{sections/10_conclusion}

\bibliographystyle{plainnat}
\bibliography{refs}

\appendix
\input{sections/A_appendix}

\end{document}

%% file: sections/00_abstract.tex
\begin{abstract}
Our previous analysis found that a model's native hashed N-gram table is where continual updates disturb the base least, but its writes needed backpropagation and changed the pretrained table. We present Delta N-gram, a forward-only write path into that table. The pretrained table $T^0$ is never modified. Evidence is written into a sparse patch $\Delta$ on the rows it addresses: a 4.1M-parameter writer emits a target and a gate per (position, hash head) event, and a block-averaged delta rule keeps each row a convex combination of its old value and the gated targets. The writer is trained offline on raw text with the pretraining objective; at use time writing takes forward passes only, and clearing $\Delta$ restores the original model. A 0.6B prototype trained to answer questions keeps 16/16 facts after a second write; online gradient writes keep 2--4/16. We study two writers: a plain writer that encodes evidence with the $T^0$-only model, and a delta writer that reads its current memory while encoding. On Qwen3.8-Flash-Next, the plain writer's memory is document-specific: on 64 held-out news events it lowers continuation NLL by 0.068 nats/token against empty memory and by 0.024 against another document's memory, and 0.063 and 0.013 after 16 intervening articles. The delta writer's memory is more document-specific and keeps more of it under distraction (70\% vs 54\%). Over 35,000 writes on our previous seven-task chain both writers stay stable and the delta writer drifts less (layer-2 math drift 176\% vs 188\%); later accuracy changes by $-0.26$/$-0.52$pp ($-7.29$pp for gradient updates) and small-$k_i$ drift is 17.9/17.5\% (39.2\%). Acquisition is negative, and general-input drift reaches 32.8\% (20.75\%): format rows read by every example grow to 4--12 times their pretrained norm. Zeroing eight such rows removes over 98\% of layer-2 drift on math inputs, together with their NLL gain.
\end{abstract}

%% file: sections/01_introduction.tex
\section{Introduction}
\label{sec:intro}

A model that learns while it is used has to write new information somewhere, and every write is paid for by what it disturbs. In our previous work, \emph{Representation Drift}~\citep{wu2026drift}, we compared three places to write on the same 48-layer mixture-of-experts model: the rows of its native, pretrained hashed N-gram table~\citep{qwen38flashnext}, one continually trained LoRA adapter~\citep{hu2022lora}, and a fresh LoRA per task merged into the backbone. Updating only the N-gram table changed the base model's representations of general inputs least (big-$K$ drift 26\%/15\% on math/code, against 64--97\%/35--39\% for LoRA), and this drift did not grow with the number of tasks. We concluded that long-running online learning should write new knowledge only into the memory table and keep the backbone fixed.

That conclusion was reached with an expensive and irreversible write. Every table update in~\citep{wu2026drift} was an Adam step: a full backward pass through the network per write, optimizer state per row, labels to define the loss, and a table that, once changed, no longer contains its pretrained values. A system that writes continually during use needs the opposite properties. Writing should cost a forward pass, need no labels or task boundaries, and leave the pretrained knowledge recoverable.

This paper builds that write path. We keep the pretrained table $\Tz$ fixed and write into a sparse patch $\Delta$, so the model reads $\Tz+\Delta$. A small writer, trained offline, turns each block of evidence into writes. For every event, that is, every (position, hash head) pair, it emits a target $v$ and a gate $\beta$ from the frozen model's hidden states. Events that address the same row in a block are averaged and applied with a delta rule we call D1. D1 moves the row toward the gated targets and keeps it a convex combination of its old value and those targets. Seen from linear attention, this is a DeltaNet~\citep{schlag2021linear,yang2024deltanet} whose keys are one-hot hash addresses, whose state is never reset, and whose write targets come from a learned fast-weight programmer~\citep{schmidhuber1992fastweights}. The writer is trained with the pretraining objective on raw text. Each document is split into chunks; each chunk is first predicted from the memory written so far and then written, and the loss of later chunks backpropagates through the whole chain of writes. At use time the same writer and the same D1 run forward only.

The writer must read the evidence before it writes, and we study two ways of doing so. The \emph{plain writer} encodes evidence with the model reading only $\Tz$. Its targets are then anchored to the evidence, rows stay separable, and all blocks of a document can be encoded in one batch. The \emph{delta writer} encodes evidence with the model reading its current memory, so what it writes can depend on what it has already stored. That is what long-term continual learning ultimately needs, for instance to connect a new report to an entity written earlier. Its price is sequential writing and a feedback loop through the memory. We analyse that loop and train both writers on the same data.

\paragraph{What the design gives by construction.}
\begin{itemize}[leftmargin=*,itemsep=2pt]
  \item \textbf{No new memory module.} Memory is the model's own N-gram table, addressed by pretrained hashes of the last two or three tokens and read by the model's native injection at layer~2.
  \item \textbf{Pretrained parameters are untouched.} Rows that were never written read exactly $\Tz$; clearing $\Delta$ returns the original model, and each user's $\Delta$ is a separable set of sparse rows.
  \item \textbf{Forward-only writes.} At use time there is no backpropagation, no optimizer state and no label.
  \item \textbf{Row-level attribution.} Every change lives on a known row, so drift can be traced to, and intervened on, individual rows.
\end{itemize}

\paragraph{Findings.}
We train both writers (4.1M parameters each) on Qwen3.8-Flash-Next (512 experts, 48 layers) for 100M target tokens. We evaluate them on held-out news events, on the seven-task chain of~\citep{wu2026drift}, and on LoCoMo~\citep{maharana2024locomo}.
\begin{enumerate}[leftmargin=*,itemsep=2pt]
  \item \textbf{Trained to answer, forward writes bind and keep facts, but only in the trained form.} On a 0.6B prototype trained with a question-answering objective, the writer keeps 16 of 16 facts after a second write, where online gradient writes into the same table keep 2--4. With a paraphrased question it stays at the empty-memory level (\cref{sec:memory-proto}).
  \item \textbf{Raw-text training yields document-specific memory.} On 64 late-2025 news events, reading a document's own memory beats reading another document's, which beats empty memory, at every training stage. At 100M tokens the two gains are 0.068 and 0.024 nats/token, and 0.063 and 0.013 after 16 intervening articles (\cref{sec:memory}).
  \item \textbf{Reading memory while encoding changes how the writer writes.} The delta writer's memory is more specific to its document ($\Ddonor$ 0.032 against 0.023 without distractors) and keeps more of that specificity after 16 distractors (70\% against 54\%); its total gain over empty memory on these news events is smaller. A weight $\times$ encoding factorial places this change in the learned weights (\cref{sec:memory-encoding}). On the seven-task chain the delta writer drifts less than the plain writer: math drift is 176\% against 188\% at layer~2 and 40.2\% against 41.8\% at the head. It also answers slightly more (129 against 125 of 448 test items, and 112 against 103 written examples) (\cref{sec:longchain}).
  \item \textbf{Long write chains are stable.} Both writers, including the delta writer whose encoding feeds back through its own memory, made 35,000 consecutive writes on two hardware stacks without numerical failure (\cref{sec:longchain}).
  \item \textbf{What was stored changes little afterwards, but little is stored.} After each task is written, learned-task accuracy changes by $-0.26$pp (plain) and $-0.52$pp (delta) over the rest of the chain, against $-7.29$pp for the Adam table updates of~\citep{wu2026drift}. Small-$k_i$ drift at the head input is 17.9\% and 17.5\%, against 39.2\%. Net acquisition on these tasks, which the writer never saw, is negative ($-12/448$), and recall of written examples is 23--25\% (\cref{sec:longchain}).
  \item \textbf{Drift concentrates on a few hot rows and can be located.} General-input big-$K$ drift is 32.8\%, 1.6 times that of~\citep{wu2026drift}. It enters at the injection layer through eight format bigram rows that every example writes, whose patches reach 4.4--11.7 times $\norm{\Tz}$. Zeroing them removes over 98\% of the layer-2 drift on math inputs and also removes their NLL gain (\cref{sec:rows}).
\end{enumerate}

\paragraph{Analysis.} \Cref{sec:analysis} gives the relations behind these measurements: the bound of old-task drift by prefix overlap times row displacement, inherited from~\citep{wu2026drift}; the boundedness of D1; row separability, under which zeroing a set of rows after the fact equals never committing them; and the fixed-point dynamics of a writer that encodes evidence through its own memory, which we then test with a weight $\times$ encoding factorial.

\paragraph{Position.} The two properties that a forward write path into the native table had to show, that writes carry information forward and that long chains of them stay stable without touching $\Tz$, hold. The two that remain, using what was written at a different position and bounding the absolute drift of hot rows, are where we take the method next (\cref{sec:discussion}). Code is available at \url{https://github.com/af-74413592/delta-ngram}.

%% file: sections/02_related.tex
\section{Related Work}
\label{sec:related}

\begin{table}[t]
  \centering
  \footnotesize
  \caption{Ways to learn while in use. ``Pretrained parameters'' refers to the weights and tables the model shipped with.}
  \label{tab:related}
  \setlength{\tabcolsep}{3pt}
  \begin{tabular}{>{\raggedright\arraybackslash}p{1.9cm}*{4}{>{\raggedright\arraybackslash}p{2.05cm}}>{\raggedright\arraybackslash}p{2.3cm}}
    \toprule
    & In-context writes (linear attn., GDN, KDA) & Online LoRA & Gradient table edits~\citep{wu2026drift} &Persistent memory block (MemT) & \textbf{Delta N-gram} \\
    \midrule
    Written state & sequence state & low-rank backbone increment & native N-gram table & added memory block (dense GDN-2 state) & \textbf{sparse patch on the native N-gram table} \\
    Write signal & forward projection & backprop & backprop (Adam) & forward (GDN-2 update) & \textbf{forward writer + D1} \\
    Lifetime & cleared at sequence end & permanent & permanent & permanent & \textbf{permanent (optional decay)} \\
    Pretrained parameters & unchanged & frozen; adapter separate & table overwritten & unchanged; module added & \textbf{unchanged; $\Tz$ fixed} \\
    Use-time cost & near zero & full backward per write; optimizer state per user & same & forward & \textbf{one $\Tz$-only forward + 4M-param writer} \\
    Undo / per user & n/a & yes (adapter per user) & needs a copy of the table & yes (state per user) & \textbf{clear $\Delta$; sparse rows per user} \\
    Rows touched per write & whole state & all adapter parameters & rows read (sparse) & block state (dense, routed) & \textbf{rows read (sparse)} \\
    Addressing & projected keys & none & hashed n-grams & learned keys & \textbf{hashed n-grams, pretrained} \\
    Extra training & with the backbone & none & none & 0.45--3.6B params, 34B tokens & \textbf{4M params, 100M tokens} \\
    \bottomrule
  \end{tabular}
\end{table}

\paragraph{Where to write.} Our previous work, \emph{Representation Drift}~\citep{wu2026drift}, compared update channels on the model we use and found that writing only into the native N-gram table perturbs general inputs least; its costs were interference among stored tasks and weak acquisition. This paper keeps that channel and replaces the write: forward instead of gradient, patch instead of in-place change. Forgetting under sequential fine-tuning~\citep{mccloskey1989catastrophic,kirkpatrick2017ewc,luo2023empirical,biderman2024lora} and subspace constraints on successive updates~\citep{farajtabar2020ogd,wang2023olora,liang2024inflora} motivate the comparison but act on dense parameters, not on addressed rows.

\paragraph{Delta rules and fast weights.} Linear attention can be read as a fast-weight memory written by a slow network~\citep{schmidhuber1992fastweights,schlag2021linear}. DeltaNet replaces additive writes with error-correcting delta writes~\citep{schlag2021linear,yang2024deltanet}. Gated DeltaNet adds a scalar decay~\citep{yang2025gdn}, and KDA makes the decay channel-wise~\citep{kimi2025linear}. Test-time training generalises the inner update to a learned objective~\citep{sun2024ttt}. Qwen3.8 itself contains such layers. These lines address interference among keys in a finite state that is cleared at the end of each sequence, which is why they need decay. Our keys are one-hot hash addresses over hundreds of millions of rows, the state persists across requests, and the writer is added to a frozen pretrained model rather than trained with it. Decay in our setting pulls toward $\Tz$ rather than toward zero (\cref{sec:method-state}).

\paragraph{Sparse fast-weight memories.} Sparse Delta Memory~\citep{sdm2026} enlarges the state of a gated linear RNN with an explicit memory whose slots are selected by product keys and written with a gated delta rule. Fast-weight product-key memory~\citep{fwpkm2026} updates the values of a product-key memory by chunk-level gradient descent on a reconstruction loss, at training and at inference time, and can carry them across adjacent segments. Both are layers trained together with the backbone, over learned keys. Ours adds a writer to a frozen pretrained model, writes into a table the model was pretrained to read, and uses no gradients at use time.

\paragraph{Editing memory tables.} Memory and product-key layers provide large sparsely addressed stores~\citep{lample2019pkm,berges2024memory}. Sparse memory finetuning updates only rows selected by their access statistics and forgets less than LoRA or full fine-tuning~\citep{lin2025sparsememory,smf2026}. EngramEdit writes facts into the N-gram embeddings of LongCat-Flash-Lite with a closed-form ridge solution and a penalty on reused rows~\citep{engramedit2026}. ENGRAFT optimises the Engram rows of Qwen3.8-Flash-Next by gradient descent with a KL constraint and keeps the result as a removable overlay~\citep{engraft2026}. All of these compute the write from labelled facts or tasks, by backpropagation or a solve. Ours writes from raw evidence in a forward pass, with no labels, into a separable patch. Knowledge editing in FFN weights~\citep{meng2022rome,meng2023memit,fang2025alphaedit} shares the key--value view we use for drift but edits dense projections.

\paragraph{Persistent memory for a frozen backbone.} The closest work is MemT-1.0~\citep{memt2026}. It also freezes the backbone, writes a per-user memory forward-only and trains the memory module offline. Its memory is a set of added blocks with dense GDN-2 state, routed over storage experts, updated token by token and committed per turn. It trains 0.45--3.6B parameters on 34B tokens: continued pretraining, then post-training that distils from the frozen backbone given the full history. Post-training adds about 10.69 points over the continued-pretraining checkpoint, the largest effect in its ablations. Our memory is the model's own N-gram table, read through its native injection, with a 4M-parameter writer trained on 100M raw-text tokens. Because the state is a sparse patch on known rows, its drift can be traced to individual rows (\cref{sec:rows}). MemT's results on long-conversation benchmarks~\citep{maharana2024locomo} show what a trained cross-position objective buys. Our LoCoMo failure (\cref{sec:longchain-locomo}) is consistent with that, and its full-context teacher is the first change we would borrow (\cref{sec:discussion}).

%% file: sections/03_method.tex
\section{Method}
\label{sec:method}

\input{sections/fig_overview}

\Cref{fig:overview} summarises the method. The backbone, the native read-out and the pretrained table $\Tz$ are frozen. The only trained component is a small writer, and it is trained offline; at use time the model writes into a sparse patch $\Delta$ with forward passes only.

\subsection{Native N-gram memory}
\label{sec:method-native}

A hashed N-gram table $T\in\R^{N\times d}$ is read at every position $p$ through $H$ hash heads. Head $h$ maps the last two or three tokens ending at $p$ to a row address $j_h(p)$ that depends only on token ids and is fixed at pretraining. We write the table as
\begin{equation}
  T=\Tz+\Delta ,
  \label{eq:patch}
\end{equation}
where $\Tz$ is the pretrained table and $\Delta$ is a patch that is nonzero only on rows that have been written. Every row with $\Delta_j=0$ reads exactly its pretrained value.

On Qwen3.8-Flash-Next~\citep{qwen38flashnext}, $H=16$ (bigram and trigram, eight heads each) and each row has $d=160$ dimensions. The read-out concatenates the 16 rows and injects the result into the residual stream before the attention of layer~2:
\begin{gather}
  e_q=\big[T_{j_1(q)};\dots;T_{j_{16}(q)}\big]\in\R^{2560},\qquad
  v_q=W_ve_q,\qquad k_{q,c}=W_k^{(c)}e_q, \label{eq:read1}\\
  g_{q,c}=\sigma\!\Big(\phi\big(\ip{\mathrm N(k_{q,c})}{\mathrm N(z_{q,c})}/\sqrt{D}\big)\Big),\qquad \phi(x)=\mathrm{sign}(x)\sqrt{\max(|x|,10^{-6})}, \label{eq:read2}\\
  b_q=\operatorname{flat}_{c=1}^{4}\big(g_{q,c}v_q\big),\qquad
  u_q=b_q+\operatorname{SiLU}\!\Big(\operatorname{DWConv}\big(\mathrm N_{\mathrm{conv}}(b)\big)_q\Big)\in\R^{10240}. \label{eq:read3}
\end{gather}
Here $c$ indexes the four residual streams, $z_{q,c}$ is the hidden state after layer~1, $\mathrm N$ is RMSNorm, and the depthwise convolution has four taps with dilation~3. We use this read-out unchanged. The value path is linear in $e_q$, so a patch acts through its magnitude; the key and convolution paths are normalised, so there it acts through its direction.

Because every weight after the injection is frozen, a change of the table reaches the backbone only as a change of its input. In the key--value notation of~\citep{wu2026drift,fang2025alphaedit}, where a layer computes $Y=KW$ and $Y_t-Y_i=K_i(W_t-W_i)+(K_t-K_i)W_t$, only the second term survives: for the backbone, writing the table is a pure perturbation of $K$.

\subsection{From DeltaNet to a row-wise delta rule}
\label{sec:method-d1}

DeltaNet~\citep{schlag2021linear,yang2024deltanet} reads $o=Sq$ and writes $S\leftarrow S-\beta(Sk-v)k^\top$. We obtain the write rule for the table in four steps.
\begin{enumerate}[leftmargin=*,itemsep=2pt]
  \item \textbf{One-hot keys.} Treat each row of the table as a column of $S$ and set $k=e_j$. Reading $Se_j$ is a table lookup, and $kk^\top$ is nonzero only at $j$, so a write touches only row $j$. Distinct rows do not interfere except through hash collisions, and $\norm{k}=1$ holds by construction.
  \item \textbf{Write the patch.} The update acts on $\Delta_j$; $\Tz$ is never written.
  \item \textbf{Average within a block.} Evidence arrives in blocks. Every pair (position $p$, head $h$) in a block is a \emph{write event} $a$ addressed to row $j(a)=j_h(p)$. The $n_j$ events addressed to row $j$ all see the same pre-block snapshot of the state, and their effects are averaged.
  \item \textbf{Targets and step sizes come from a writer.} Where DeltaNet obtains $v$ and $\beta$ by forward projections, here a learned writer provides them (\cref{sec:method-writer}).
\end{enumerate}
Together these give the rule we call D1. For every row touched by the block,
\begin{equation}
  A_j=\frac1{n_j}\sum_{a\to j}\beta_a v_a,\qquad
  w_j=\frac1{n_j}\sum_{a\to j}\beta_a\in[0,1],\qquad
  \boxed{\;\Delta_j^{+}=(1-w_j)\,\Delta_j+A_j,\qquad c_j^{+}=c_j+1\;}
  \label{eq:d1}
\end{equation}
The count $n_j$ includes events with $\beta_a=0$; rows not touched by the block are left exactly unchanged; $c_j$ is the number of blocks that have written row $j$. Equivalently, $\Delta_j^{+}-\Delta_j=\frac1{n_j}\sum_a\beta_a(v_a-\Delta_j)$: each event moves the row in proportion to its residual $v_a-\Delta_j$, as in the delta rule. \Cref{sec:analysis-d1} shows that averaging, rather than summing, is what keeps the rule bounded.

\subsection{The writer}
\label{sec:method-writer}

When a block of evidence $x_t$ arrives, the writer proceeds in three steps.

\paragraph{Encoding.} The frozen model runs over the block, and we take the input of the LM head, normalised by a parameter-free RMSNorm (Qwen3.8 has no separate final norm, and the per-token scale of this input varies widely):
\begin{equation}
  \hat H_t=\mathrm N\big(E(x_t;\,\Tz+\lambda\Delta_t)\big),\qquad \lambda\in\{0,1\},
  \label{eq:encode}
\end{equation}
with gradients stopped at $\hat H_t$. With $\lambda=0$ the evidence is encoded by the model with an empty patch; we call this \emph{$\Tz$-only encoding} and the writer trained with it the \emph{plain writer}. With $\lambda=1$ the evidence is encoded by the model reading its current memory (\emph{memory-aware encoding}). The \emph{delta writer} is the plain writer forked at 80M training tokens and fine-tuned with $\lambda=1$ for the remaining 20M.

\paragraph{Event features.} For event $a$ at position $p$ and head $h$,
\begin{equation}
  x_a=\big[\hat H_p;\ \mathrm{mean}(\hat H_{p+1:p+4});\ \mathrm{Pool}(\hat H);\ \mathrm{emb}(h)\big],
\end{equation}
the representation at the position, the mean over the valid tokens among the next four positions (a zero vector if there are none, as at the end of a block), an attention pool over the block, and an embedding of the (order, head) pair. The right context lets the writer see what follows the n-gram whose row it is about to write.

\paragraph{Target and gate.}
\begin{equation}
  v_a=s_{h}\,V_\theta(x_a),\qquad
  \beta_a=\sigma\big(B_\theta[\,x_a,\ \Delta_{j(a)},\ v_a-\Delta_{j(a)},\ \log(1+c_{j(a)})\,]\big).
  \label{eq:writer}
\end{equation}
$V_\theta$ and $B_\theta$ are two-layer MLPs with hidden width 256. $V_\theta$ outputs a 160-dimensional target and $B_\theta$ a single gate logit, so $\beta_a$ and $w_j$ are scalars. The fixed per-head scale $s_h$ is calibrated at initialisation so that the initial targets have RMS $\rho\cdot\mathrm{RMS}(\Tz_h)$ with $\rho=0.01$; it is a calibration, not a cap, and the target scale is free to change during training. The target does not read the row state directly. Under memory-aware encoding it still depends on the memory through $\hat H_t$; stopping the gradient removes only the backward path, not this forward dependence. The gate reads the row state directly: it sees the row's current patch, the residual and the write count, so it can learn to write less when the row already agrees with the target and more when the old value should be replaced. The native writer has 4,075,681 parameters.

The writer is tied to the backbone and the table it was trained on. Its input $\hat H$, its target scale and the effect of a patch through the read-out all depend on them; in our 0.6B prototype, a writer trained on a base model kept about one fifth of its gain when moved to the post-trained model.

\subsection{Offline training with the pretraining objective}
\label{sec:method-training}

The writer is trained on raw text with next-token prediction, the same objective as pretraining, and needs no labels or question--answer data.
\begin{enumerate}[leftmargin=*,itemsep=2pt]
  \item A document, or a time-ordered stream of reports on one event, is split into blocks of 512 tokens. Each query carries the last 128 tokens of the previous block as prefix.
  \item Memory starts empty. The first block of an ordinary document is only written; in an event stream the whole first article is only written, and scoring starts with the second article. Every later block is \emph{predicted, then written}: the model reads the memory as it was before the block, the block's cross-entropy is computed, and the block is then written with D1. The last block is not written. Within an event stream, memory persists across articles.
  \item The loss is normalised once by the number of target tokens in the whole training group and backpropagated through the full chain of writes; only $\theta$ receives gradients.
\end{enumerate}
The gradient path is: loss $\to$ frozen layers above the injection $\to$ native read-out \eqref{eq:read1}--\eqref{eq:read3} $\to$ patch $\Delta_t$ $\to$ D1 chain \eqref{eq:d1} $\to$ writer \eqref{eq:writer}. Stopping gradients at $\hat H_t$ removes the branch ``old memory $\to$ encoding $\to$ features'' under memory-aware encoding. This is a truncation of backpropagation through time, chosen for cost: the forward dynamics, and hence the agreement between training and use, are unchanged.

\paragraph{Training and use run the same computation.} The writer and D1 used at training time are exactly those used at use time; training only adds a backward pass around them. At use time evidence is written as it arrives and queries read $\Tz+\Delta$, with no backpropagation, no optimizer and no change to any weight. \Cref{alg:use,alg:train} give both procedures.

\input{sections/alg_writer}

\subsection{Memory state, reversibility and decay}
\label{sec:method-state}

The state stores only rows that have been written: their row ids, patches $\Delta_j$ and write counts $c_j$, plus a block clock with the bookkeeping for lazy decay. It carries no optimizer state. Clearing the state returns the model to its empty-patch read-out. This is exact within one read-out implementation, but the split FP32 read-out of \cref{app:engineering} is not bitwise equal to the original BF16 read-out, so every comparison uses the empty patch in the same implementation as its baseline. A user's state is a self-contained set of sparse rows that can be stored apart from the shared backbone and writer.

D1 already forgets by content: rewriting a row replaces its old value by the factor $1-w_j$. An optional decay adds forgetting by time. Unlike the decay of Gated DeltaNet and KDA~\citep{yang2025gdn,kimi2025linear}, which pulls the state toward zero, ours pulls the patch toward zero and hence the table toward $\Tz$: $\Delta_j\leftarrow\alpha\Delta_j$, i.e.\ $T_j\leftarrow\Tz_j+\alpha(T_j-\Tz_j)$. It is applied lazily. Each block of new evidence is first encoded with the current state; the clock then advances once, and the rows the block addresses enter the gate and D1 at their decayed values. A read uses each row at its decayed value without advancing the clock or storing the result. The main method uses $\alpha=1$ (no decay); \cref{sec:memory} compares $\alpha=0.99$.

%% file: sections/fig_overview.tex
\begin{figure}[t]
  \centering
  \begin{tikzpicture}[
      font=\footnotesize,
      box/.style={draw, rounded corners=2pt, align=center, minimum height=1.15cm, inner sep=3pt},
      frozen/.style={box, fill=gray!12},
      trained/.style={box, fill=orange!22, draw=orange!70!black},
      state/.style={box, fill=blue!10, draw=blue!60!black},
      io/.style={box, draw=gray!70, dashed},
      flow/.style={-{Stealth[length=2mm]}, thick},
      side/.style={-{Stealth[length=1.6mm]}, densely dashed, blue!60!black},
      train/.style={-{Stealth[length=2mm]}, densely dashed, thick, red!70!black},
    ]
    \node[io, text width=1.9cm] (ev) at (0,0) {Evidence\\block $x_t$\\(512 tokens)};
    \node[frozen, text width=2.9cm] (enc) at (3.25,0) {Frozen model reads\\$\Tz+\lambda\Delta_t$, encodes\\$x_t\to\hat H_t$ (stop-grad)};
    \node[trained, text width=3.2cm] (wr) at (7.2,0) {\textbf{Writer} $\theta$ (4.1M)\\per event $a=(p,h)$:\\$v_a=s_hV_\theta(x_a)$\\$\beta_a=\sigma(B_\theta[\cdot])$};
    \node[box, text width=3.4cm] (d1) at (11.4,0) {\textbf{D1} per addressed row\\$\Delta_j\leftarrow(1-w_j)\Delta_j+A_j$\\(block average)};
    \draw[flow] (ev) -- (enc);
    \draw[flow] (enc) -- (wr);
    \draw[flow] (wr) -- (d1);

    \node[frozen, text width=2.9cm] (t0) at (7.7,-2.75) {$\Tz$: pretrained rows,\\never modified};
    \node[state, text width=2.9cm] (delta) at (11.4,-2.75) {$\Delta$: sparse patch\\(written rows, counts $c_j$)};
    \node at ($(t0.east)!0.5!(delta.west)$) {\large $+$};
    \begin{scope}[on background layer]
      \node[draw=black!60, rounded corners=3pt, inner sep=5pt, fit=(t0)(delta), label={[font=\scriptsize\itshape, anchor=north east]south east:native N-gram table $T=\Tz+\Delta$}] (table) {};
    \end{scope}
    \draw[flow, blue!60!black] (d1.south) -- node[right, font=\scriptsize] {write} (delta.north);
    \draw[side] (delta.north west) -- node[pos=0.5, left, font=\scriptsize, align=right, xshift=-9pt] {gate reads\\$\Delta_j$, $c_j$} (wr.south east);

    \node[io, text width=1.9cm] (q) at (0,-5.2) {Query or\\next-chunk\\tokens};
    \node[box, text width=2.9cm] (hash) at (3.25,-5.2) {16 hash heads:\\$j_h(p)$ from the\\last 2--3 tokens};
    \node[frozen, text width=3.2cm] (ro) at (7.2,-5.2) {Native read-out\\(gate, conv),\\injected before layer 2};
    \node[frozen, text width=3.4cm] (bb) at (11.4,-5.2) {Frozen 48-layer MoE\\$\to$ next-token loss};
    \draw[flow] (q) -- (hash);
    \draw[flow] (hash.north) |- node[pos=0.25, left, font=\scriptsize] {addresses} ($(table.west)+(0,-0.3)$);
    \draw[flow] (ro.north |- table.south) -- node[left, font=\scriptsize] {read rows} (ro.north);
    \draw[flow] (ro) -- (bb);

    \draw[train] (bb.east) -- ++(0.4,0) coordinate (rc) |- ($(wr.north)+(0,0.4)$) coordinate (tl) -- (wr.north);
    \node[font=\scriptsize, text=red!70!black, anchor=south east, inner sep=2pt] at (rc |- tl) {offline training only: loss backpropagated through the D1 chain into $\theta$};

    \node[frozen, minimum height=0.35cm, minimum width=0.5cm] (l1) at (-0.8,-6.6) {};
    \node[right=1pt of l1, font=\scriptsize] {frozen};
    \node[trained, minimum height=0.35cm, minimum width=0.5cm] (l2) at (1.3,-6.6) {};
    \node[right=1pt of l2, font=\scriptsize] {trained offline};
    \node[state, minimum height=0.35cm, minimum width=0.5cm] (l3) at (3.9,-6.6) {};
    \node[right=1pt of l3, font=\scriptsize] {memory state (per user)};
    \node[font=\scriptsize, anchor=west] at (7.6,-6.6) {$\lambda=0$: $\Tz$-only encoding; \ $\lambda=1$: memory-aware};
  \end{tikzpicture}
  \caption{Delta N-gram. \emph{Top}: a block of evidence is encoded by the frozen model, the writer emits a target $v_a$ and gate $\beta_a$ for every (position, hash head) event, and D1 updates the addressed rows of the sparse patch $\Delta$; the gate also reads each row's current patch and write count. \emph{Bottom}: every forward pass reads $\Tz+\Delta$ at the hashed addresses of the last two or three tokens through the model's native read-out. At use time only the solid arrows run, forward only; during offline training the next chunk's loss is backpropagated through the whole write chain into the writer (dashed red). Clearing $\Delta$ restores the original model.}
  \label{fig:overview}
\end{figure}

%% file: sections/alg_writer.tex
\begin{algorithm}[t]
\caption{Delta N-gram at use time: forward-only writing and reading.}
\label{alg:use}
\begin{algorithmic}[1]
\Require frozen backbone, read-out and $\Tz$; frozen writer $\theta$; encoding $\lambda\in\{0,1\}$; decay $\alpha$ ($\alpha=1$: none)
\State $\Delta\gets\varnothing$ \Comment{sparse rows $(j,\Delta_j,c_j)$ and a block clock}
\Procedure{Write}{$x_t$} \Comment{one block of new evidence}
  \State $\hat H\gets\mathrm N\big(E(x_t;\,\Tz+\lambda\Delta)\big)$ \Comment{\eqref{eq:encode}; current state, gradient stopped}
  \State advance the clock; take each row the block addresses at its decayed value $\alpha^{\text{elapsed}}\Delta_j$
  \For{each event $a=(p,h)$, $p$ a new token of $x_t$} \Comment{prefix and padding never write}
    \State $j\gets j_h(p)$;\quad build $x_a$ from $\hat H$, $p$ and $h$ \Comment{right mean over valid tokens; zero if none}
    \State $v_a\gets s_hV_\theta(x_a)$;\quad $\beta_a\gets\sigma\big(B_\theta[x_a,\,\Delta_j,\,v_a-\Delta_j,\,\log(1+c_j)]\big)$ \Comment{pre-block state}
  \EndFor
  \For{each row $j$ addressed by the block} \Comment{D1, \eqref{eq:d1}; $n_j$ counts all events on $j$}
    \State $w_j\gets\frac1{n_j}\sum_{a\to j}\beta_a$;\quad $A_j\gets\frac1{n_j}\sum_{a\to j}\beta_av_a$
    \State $\Delta_j\gets(1-w_j)\Delta_j+A_j$;\quad $c_j\gets c_j+1$
  \EndFor
\EndProcedure
\Procedure{Read}{$q$}
  \State run the model on $q$; every position reads $\Tz+\Delta$ through \eqref{eq:read1}--\eqref{eq:read3}, each row at its decayed value
  \State the clock is not advanced and nothing is written back
\EndProcedure
\State To remove a user's memory, clear $\Delta$: the model reads the empty patch again.
\end{algorithmic}
\end{algorithm}

\begin{algorithm}[t]
\caption{Offline training of the writer (native settings). The forward computation is that of \cref{alg:use}; training adds one backward pass.}
\label{alg:train}
\begin{algorithmic}[1]
\For{each update}
  \State $S\gets0$;\quad $N\gets0$ \Comment{summed token loss and target count over the group}
  \For{each of the 12 trajectories in the group} \Comment{a document or an event stream}
    \State split each article separately into 512-token blocks $x_1,\dots,x_L$ \Comment{no block spans two articles}
    \State $\Delta\gets\varnothing$ \Comment{memory persists across articles of one stream}
    \For{$t=1,\dots,L$}
      \If{$x_t$ is scored} \Comment{documents: $t\ge2$; event streams: from the second article}
        \State query prefix: the previous 128 tokens of the same article, or BOS if $x_t$ starts an article
        \State $S\gets S+{}$summed cross-entropy of $x_t$ reading $\Tz+\Delta$;\quad $N\gets N+|x_t|$ \Comment{memory before $x_t$}
      \EndIf
      \If{$t<L$} \textsc{Write}$(x_t)$ \Comment{the last block is not written}
      \EndIf
    \EndFor
  \EndFor
  \State $\mathcal L\gets S/N$ \Comment{normalised once, across all GPUs}
  \State backpropagate $\mathcal L$: frozen layers $\to$ read-out $\to\Delta\to$ D1 chain $\to\theta$
  \State update $\theta$ with AdamW (learning rate $3\times10^{-4}$, no weight decay, clipping at 1)
\EndFor
\end{algorithmic}
\end{algorithm}

%% file: sections/04_analysis.tex
\section{Analysis}
\label{sec:analysis}

This section states what the construction guarantees and what it does not. \Cref{sec:analysis-d1} bounds the state. \Cref{sec:analysis-drift} carries the drift bound of~\citep{wu2026drift} over to the patch. \Cref{sec:analysis-sep} shows that, for the plain writer, the effect of removing any set of rows can be read off the final state. \Cref{sec:analysis-mem} analyses the dynamics that memory-aware encoding introduces.

\subsection{Boundedness of D1}
\label{sec:analysis-d1}

\begin{proposition}[Boundedness]
\label{prop:bounded}
If every target satisfies $\norm{v_a}\le V$, then D1 gives $\norm{\Delta_j^{+}}\le\max(\norm{\Delta_j},V)$ for every row. Starting from $\Delta=0$, $\norm{\Delta_j}\le V$ at all times.
\end{proposition}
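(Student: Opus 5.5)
The plan is to read D1 in \eqref{eq:d1} as a convex combination, then apply the triangle inequality. Fix a row $j$ touched by the block, with $n_j$ events $a\to j$ and gates $\beta_a\in[0,1]$. Rows the block does not touch keep $\Delta_j^{+}=\Delta_j$ exactly, so for them the bound is immediate. For a touched row, put $w_j=\frac1{n_j}\sum_a\beta_a$; since each $\beta_a\in[0,1]$, we have $w_j\in[0,1]$, so $1-w_j\ge0$. Next, write $\Delta_j^{+}=(1-w_j)\Delta_j+\frac1{n_j}\sum_a\beta_a v_a$. The coefficients $(1-w_j)$ and $\beta_a/n_j$ are nonnegative and sum to $1-w_j+w_j=1$, so $\Delta_j^{+}$ is a convex combination of $\Delta_j$ and the targets $v_a$. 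Events with $\beta_a=0$ simply get weight zero; they still count in $n_j$, which only shrinks the other weights and preserves the sum.

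Now apply the triangle inequality:
\[
\norm{\Delta_j^{+}}\le(1-w_j)\norm{\Delta_j}+\frac1{n_j}\sum_a\beta_a\norm{v_a}\le(1-w_j)\norm{\Delta_j}+w_jV\le\max(\norm{\Delta_j},V).
\]
The last step holds because a convex combination of two numbers is at most their maximum. The same argument works with the decayed value $\alpha^{\text{elapsed}}\Delta_j$ in place of $\Delta_j$ when $\alpha\le1$, because decay only shrinks the norm. The statement takes $\alpha=1$, so this case is a side remark.

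The second claim follows by induction over blocks. Initially $\norm{\Delta_j}=0\le V$. If $\norm{\Delta_j}\le V$ before a block, the first claim gives $\norm{\Delta_j^{+}}\le\max(\norm{\Delta_j},V)=V$ after it, for every row.

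There is no real obstacle. The only point needing care is that D1 averages rather than sums. Because the weights are divided by $n_j$, they sum to $w_j\le1$ no matter how many events hit the row. With summation, the coefficient $1-\sum_a\beta_a$ could go negative and the total weight on the targets could exceed one, which breaks convexity and with it the bound. This is the observation the section highlights.
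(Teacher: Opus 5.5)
Your proof is correct and is essentially the paper's argument: D1 makes $\Delta_j^{+}$ a convex combination of $\Delta_j$ and the targets, so its norm is at most $\max(\norm{\Delta_j},V)$, and induction from $\Delta=0$ gives the second claim. The only difference is cosmetic: the paper treats $w_j=0$ separately and groups the targets into the normalised average $\bar v_j$, while your single convex combination with weights $1-w_j$ and $\beta_a/n_j$ covers $w_j=0$ without a case split.
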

\begin{proof}
If $w_j=0$ then $A_j=0$ and the row is unchanged. Otherwise $\Delta_j^{+}=(1-w_j)\Delta_j+w_j\bar v_j$ with $\bar v_j=\sum_a\beta_av_a/\sum_a\beta_a$, a convex combination of $\Delta_j$ and a convex combination of targets, since $w_j\in(0,1]$.
\end{proof}

For a fixed writer the bound $V$ exists under both encodings. $\hat H$ is RMS-normalised, the right-context mean and the attention pool are convex combinations of normalised vectors, and there are 16 head embeddings, so the features lie in a compact set and $V_\theta$ is bounded on it. The bound holds for each fixed $\theta$, not uniformly over training.

Averaging over the $n_j$ events, rather than summing, is what makes this hold. With a sum, $w_j=\sum_a\beta_a$ can far exceed 1: twenty events with $\beta=0.25$ give $1-w_j=-4$, and repeated application diverges. Averaging bounds the aggregate strength of repeated hits on a row. It is not a key normalisation, since one-hot keys already have unit norm.

Two things the proposition does not give. First, the backward pass is not a contraction: because the gate reads the old patch, the state Jacobian gains a term $\frac1{n_j}\sum_a(v_a-\Delta_j)(\nabla_{\Delta}\beta_a)^\top$ that can exceed one, so training clips gradients. Second, $V$ is set by the writer's targets, not by $\norm{\Tz_j}$. The proposition bounds the state; it does not bound drift relative to the pretrained values. A row written by every example is pulled toward the writer's targets whatever its pretrained norm, which is how the hot rows of \cref{sec:rows} reach 4--12 times $\norm{\Tz_j}$.

\subsection{From row displacement to representation drift}
\label{sec:analysis-drift}

We follow the notation of~\citep{wu2026drift}. $F^t(q)$ is the representation of a fixed context $q$ at a measured site (the LM-head input unless stated otherwise) after the state has absorbed $t$ tasks. Let $\Psi$ be the frozen map from the injected input to that site, and $\Phi$ the read-out \eqref{eq:read1}--\eqref{eq:read3} from table rows to $u$.

The bound below is conditional, with one condition for each map.
\begin{itemize}[leftmargin=*,itemsep=2pt]
  \item \textbf{Read-out.} The stacked rows $e$ are linear in the table, so the straight segment between the two tables is also a straight segment in $e$. Along it, at every point and not only at its ends, every gate inner product in \eqref{eq:read2} keeps its sign; there $\Phi$ is differentiable.
  \item \textbf{Backbone.} $\Phi$ is nonlinear, so the image of that segment is a curve in $u$, and controlling $\Psi$ along the curve does not give a bound between its endpoints. We therefore assume that $\Psi$ is Lipschitz with constant $L_\Psi$ on a convex set containing the straight segment between the two injected inputs $u^i_{\le q}$ and $u^t_{\le q}$. This holds, for instance, when every token is routed to the same experts at every point of that segment.
\end{itemize}
Where these conditions fail, the bound needs additional jump terms. Because of its $10^{-6}$ floor, $\phi$ jumps slightly at zero, and a routing switch moves the output by a finite amount, so a uniform $L_\Psi$ is not a property of the architecture. Under the conditions, with a single injection site,
\begin{equation}
  \norm{F^t(q)-F^i(q)}\le L_\Psi\,\norm{u^t_{\le q}-u^i_{\le q}},
  \label{eq:lip}
\end{equation}
where the right-hand side covers the whole prefix because attention reads all of it. Let $r$ be the number of convolution taps and $\delta$ their dilation ($r=4$, $\delta=3$ for Qwen3.8), so that $u_p$ depends on $e_p,e_{p-\delta},\dots,e_{p-(r-1)\delta}$. Let $J_{p,\tau}$ ($\tau=0,\dots,r-1$) be the Jacobian of $u_p$ with respect to $e_{p-\delta\tau}$ along the same segment, and let $C_J^2=\max_p\sum_\tau\sup\norm{J_{p,\tau}}^2$, with the supremum taken over the segment. By the mean-value inequality and Cauchy--Schwarz, $\norm{\Delta u_p}^2\le C_J^2\sum_\tau\norm{\Delta e_{p-\delta\tau}}^2$, and each $\Delta e_{p'}$ enters at most $r$ of the $\Delta u_p$, namely those with $p=p'+\delta\tau$. Averaging \eqref{eq:lip} over the probe set $Q_i$ of task $i$ gives
\begin{equation}
  \big(D_i^t\big)^2:=\frac1{|Q_i|}\sum_{q\in Q_i}\norm{F^t(q)-F^i(q)}^2
  \;\le\;r L_\Psi^2C_J^2\sum_j m_{ij}\,\norm{\Delta T_j}^2,
  \label{eq:drift-bound}
\end{equation}
\begin{equation*}
  m_{ij}=\frac{\#\{(q,p,h):q\in Q_i,\ p\le q,\ j_h(p)=j\}}{|Q_i|},
\end{equation*}
with $\Delta T_j=T^t_j-T^i_j$. The factor $m_{ij}$ is the average number of times a probe's whole causal prefix reads row $j$. The bound is \emph{prefix overlap times row displacement}: a write rule cannot change the overlap, only the displacement.

Under \eqref{eq:patch}, both references are explicit:
\begin{itemize}[leftmargin=*,itemsep=2pt]
  \item \textbf{Big-$K$ drift} (general probes $Q_0$, reference $\Tz$): $\Delta T_j=\Delta^t_j$, the patch itself. A general input whose prefix reads no written row is exactly unchanged. Drift on general inputs therefore comes from rows that are both read by them ($m_{0j}>0$) and written; template and format n-grams are the candidates.
  \item \textbf{Small-$k_i$ drift} (task-$i$ probes, reference: the state just after task $i$): $\Delta T_j=\Delta^t_j-\Delta^i_j$. A row that has already converged to the writer's target when task $i$ finishes, and receives similar targets afterwards, contributes little, however large its patch is.
\end{itemize}
The two can therefore move in opposite directions. A format row that every example writes reaches its displacement early and then changes little: this inflates big-$K$ drift on every input that reads it but adds little small-$k_i$ drift afterwards. \Cref{sec:longchain,sec:rows} observe exactly this pattern.

\paragraph{Comparison with gradient table updates.} Under a per-row Adam step, as in~\citep{wu2026drift}, the coordinate step $\alpha\hat m/(\sqrt{\hat v}+\epsilon)$ is independent of the gradient's scale. Noise therefore still moves a row whose mean gradient is zero. As a random-walk approximation, valid when the normalised updates are nearly unbiased, of stable variance and only short-range correlated, the displacement after $S$ steps is about $c\,\alpha\sqrt{dS}$, or $c\sqrt{d\sum_s\alpha_s^2}$ with a varying learning rate. With the dense Adam of~\citep{wu2026drift}, momentum also keeps moving a row for about $1/(1-\beta_1)$ steps after it leaves the batch. This tail does not carry over to the sparse Adam of our prototype baseline, which updates only rows present in the batch. Under D1 the displacement is set by the residual $v_a-\Delta_j$ and the learned gate, with no optimizer state or momentum. Whether the writer actually gives $v_a\approx\Delta_j$ or a small $\beta_a$ on rows that should be kept depends on what it learned; the formula does not guarantee it.

\subsection{Row separability}
\label{sec:analysis-sep}

The plain writer has a property that makes row-level interventions cheap.

\begin{proposition}[Row separability]
\label{prop:separable}
Consider the plain writer ($\lambda=0$), a fixed evidence sequence, the same clock and decay, and an initial state with $\Delta=0$. Let $S$ be any set of rows. Compare (A) running the chain and then setting $\Delta_j=0$ for $j\in S$, with (B) running the chain without committing any write to rows in $S$. The final patches of (A) and (B) are equal, so every query reads the same table.
\end{proposition}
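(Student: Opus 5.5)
The plan is an induction over blocks on a stronger invariant: at every block $t$, the states of runs (A) and (B) agree on every row outside $S$, with the same patch and write count. Run (B) never commits to rows in $S$, so those rows stay at zero throughout. Once the invariant is established, the conclusion follows at the end of the chain: zeroing the rows of $S$ in (A) leaves both final patches equal everywhere, and since both tables are $\Tz+\Delta$ and every query reads through the same frozen read-out, every query sees the same table.

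The inductive step rests on tracing which quantities each committed row update depends on under $\lambda=0$. By \eqref{eq:encode}, the encoding $\hat H_t=\mathrm N(E(x_t;\Tz))$ does not depend on $\Delta$ at all. It is identical in both runs, and so are the event features $x_a$ and, by \eqref{eq:writer}, the targets $v_a=s_hV_\theta(x_a)$. The gate $\beta_a$ reads the state only through the addressed row: its patch $\Delta_{j(a)}$, the residual $v_a-\Delta_{j(a)}$, and the count $c_{j(a)}$.

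In D1 \eqref{eq:d1}, the new value of row $j$ is built from $\Delta_j$, $c_j$, $n_j$ and the events addressed to $j$. The count $n_j$ depends only on the token ids and the hashes, so it is the same in both runs. For $j\notin S$, the inputs to the update therefore coincide by hypothesis, and the updated row coincides. Decay is handled the same way, because it acts row-wise with the same clock and the same $\alpha$. The clock advances once per block in both runs, whether or not a write lands in $S$, since "not committing" suppresses only the row update, not the block. Rows outside $S$ that a block does not address are left unchanged in both runs. The base case is $\Delta=0$ and $c=0$ in both runs.

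The main obstacle is making precise what "not committing" means, so that no hidden coupling across rows survives. Two points need checking. First, the attention pool $\mathrm{Pool}(\hat H)$ and the right-context mean must read only $\hat H$, never the patch; under $\lambda=0$ this is immediate. Second, interpretation (B) must still compute the events addressed to $S$ and simply drop their commit, which does not alter $n_j$ for other rows, since $n_j$ counts events per row. I would state explicitly that this is exactly where $\lambda=1$ breaks the argument. With memory-aware encoding, the rows of $S$ enter $E(x_t;\Tz+\Delta_t)$ and so change $\hat H_t$, and through it every target and gate. This is the dependence that \cref{sec:analysis-mem} then studies.
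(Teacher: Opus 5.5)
Your proposal is correct and matches the paper's proof: under $\lambda=0$ the encoding, features and targets do not depend on $\Delta$, and the gate and D1 act only on the addressed row. An induction over blocks then gives equal patches and counts outside $S$, while the rows in $S$ end at zero in both runs. Your additional checks, that $n_j$ is fixed by the hashes, that the clock and lazy decay act row by row, and that $\lambda=1$ breaks the argument, spell out what the paper's shorter proof leaves implicit.
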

\begin{proof}
Under $\lambda=0$ the encoding $\hat H_t$ does not depend on $\Delta$, so the features $x_a$ and targets $v_a$ are identical in (A) and (B). The gate $\beta_a$ reads only the patch and count of its own row $j(a)$, and D1 updates row $j$ only from events addressed to $j$. By induction over blocks, every row outside $S$ has the same patch and count in (A) and (B). Rows in $S$ end at zero in both.
\end{proof}

The effect of removing any set of rows on drift and on gain can therefore be computed by zeroing them in the final state, without rerunning the write chain. The equivalence covers removal only. A cap applied at every write is a different intervention, because a row's later updates depend on its own history. Take $w=\tfrac12$, targets $4$ and then $-2$, and a cap $|\Delta|\le1$: the uncapped row goes $0\to2\to0$ and stays $0$ when its final value is capped, while a row capped at every write goes $0\to1\to-\tfrac12$. Evaluating a cap enforced during writing therefore requires replaying the write chains of the capped rows; for the plain writer, only those rows change. The equivalence is also limited in three further ways. It concerns the patch, not the full state: write counts on $S$ differ. It does not extend to trajectories that keep writing after the intervention, because the gate sees those counts. And it fails for the delta writer, whose encoding reads every row in the block and therefore couples rows.

\subsection{Memory-aware encoding: anchoring, feedback and reproducibility}
\label{sec:analysis-mem}

With $\Tz$-only encoding a target depends only on the evidence, so the patch always lies in the convex hull of $\{0\}$ and the evidence's targets: the writer can reweight what the evidence says but cannot invent new targets. This anchoring does not imply a unique fixed point. The gate reads the old patch, so the effective target $\bar v(\Delta)=\sum_a\beta_a(\Delta)v_a/\sum_a\beta_a(\Delta)$ depends on the memory even when the $v_a$ do not. Take one coordinate, two events with $v_\pm=\pm1$ and gates $\beta_\pm=\sigma(\pm4d)$. Then $w=\tfrac12$ and
\begin{equation}
  F(d)=\tfrac12d+\tfrac12\tanh(2d),
\end{equation}
which has three fixed points, $0$ and $\pm0.957$; the one at zero is unstable ($F'(0)=1.5$). The gate favours evidence that agrees with the memory, and the memory then locks in. This is a constructed example, not a measurement on the native model, but it shows that fixed targets plus a convex update do not rule out self-reinforcement through the gate. The case most exposed to it is revision, where old and new values conflict on the same row.

Memory-aware encoding adds a second route: the target itself depends on the state. We analyse the local map with the evidence, the write counts, the clock and all other external inputs held fixed. In the actual chain a written row's count grows by one at every write and the gate reads $\log(1+c_j)$, so each step applies a different map. Stability over a long chain depends on products of successive Jacobians, which the eigenvalues of a single step do not determine. For a single event with fixed evidence and count, $T(s)=s+\beta(s)\big(v(s)-s\big)$, and at a differentiable fixed point $v(s^\star)=s^\star$,
\begin{equation}
  DT(s^\star)=(1-\beta)I+\beta\,Dv(s^\star).
  \label{eq:fixed-point}
\end{equation}
With exogenous targets ($Dv=0$, as in a DeltaNet layer or our $\Tz$-only writer) this is a pure contraction by $1-\beta$. In the scalar case, $|1-\beta+\beta\mu|<1$, i.e.\ $1-2/\beta<\mu<1$ for an eigenvalue $\mu$ of $Dv$, is sufficient for linearised stability. At equality the linearisation is inconclusive: $T(s)=\tfrac12s+\tfrac12\tanh s$ has $T'(0)=1$, yet $0$ is asymptotically stable. For a row written by several events,
\begin{equation}
  DF_j=(1-w_j)P_j+\frac1{n_j}\sum_{a\to j}\Big[\beta_a\,Dv_a+(v_a-\Delta_j)(\nabla\beta_a)^\top\Big],
  \label{eq:row-jacobian}
\end{equation}
where $P_j$ selects row $j$ from the full state. The first bracketed term is the \emph{feedback} that memory-aware encoding adds; it couples rows, because the encoding reads every row in the block. The second is the \emph{gate reweighting} present under both encodings; since the targets of different events differ, it does not vanish at a fixed point.

The gain along one direction decides nothing. $J=\mathrm{diag}(0.5,2)$ contracts one direction and expands another, while negative feedback can contract despite a large gain: with $w=0.25$ and $\bar v(d)=-\tanh(2d)$, the gain at the origin is 2, yet $F'(d)=0.75-0.5\,\mathrm{sech}^2(2d)\in[0.25,0.75]$ and the map contracts globally. Rows a block does not touch contribute an identity block. Restricted to the rows it updates, a spectral radius of $DF$ below one at a fixed point is sufficient for local asymptotic stability of the map with frozen inputs. On a closed, invariant, complete set, a uniform contraction bound $L<1$ is sufficient for a unique, globally attracting fixed point. A Lipschitz bound alone is not, and strict contraction is not necessary for uniqueness. Neither can be checked analytically for a 48-layer MoE encoder, so we measure.

\paragraph{Encoding-substitution sensitivity.} To screen the feedback term we hold the direct state fixed and replace only the encoding:
\begin{equation}
  G=\frac{\norm{\bar v(\hat H_{\mathrm{mem}},\Delta)-\bar v(\hat H_{\mathrm{base}},\Delta)}_F}{\norm{\Delta\ \text{on the rows read by the block}}_F}.
  \label{eq:G}
\end{equation}
Replacing the encoding changes both the targets and the gates, so $G$ mixes a change of targets with gate reweighting and does not isolate the term $\beta\,Dv$. Blocks whose rows all have a zero patch, where the denominator vanishes, are excluded. $G$ is a secant, used as a screen rather than as a proof of contraction. \Cref{sec:memory} reports it together with twin-trajectory measurements of whether small state differences grow.

\paragraph{Reproducibility.} The analysis above is differentiable. A sparse MoE encoder is not: an arbitrarily small change of the memory can switch the experts of some tokens and move $v$ by a finite amount. This makes the memory-aware state sensitive to perturbations. On its own it does not prevent identical runs from agreeing bit for bit, but our GPU kernels are not bitwise deterministic: rebuilding $\Delta$ from the same $\hat H$ differs by about $10^{-9}$. Under memory-aware encoding such differences grow to a plateau, and the useful question is whether they stay bounded and below the read-side noise. $\Tz$-only encoding is reproducible only for a fixed batch layout, since batching changes the encoder's numerics as well.

%% file: sections/05_setup.tex
\section{Experimental Setup}
\label{sec:setup}

\subsection{Models and writer training}
\label{sec:setup-models}

\paragraph{Prototype in two stages.} Before the native model we develop the method on a frozen 0.6B model with an external, zero-initialised hash table ($\Tz=0$): 16,384 rows each for bigrams and trigrams, row width 1024. The two orders are summed, divided by $\sqrt2$ and injected before decoder layer~1. The prototype went through two stages that differ in the training objective.
\begin{itemize}[leftmargin=*,itemsep=2pt]
  \item \textbf{Question-answering objective.} On Qwen3-0.6B-Base, the writer is trained on synthetic episodes. Short evidence blocks stating facts about named entities are written one at a time, and the loss is placed on the answers to template questions about them. Training uses 2,000 episodes (500 outer steps, seed 17), with disjoint entity pools for training and evaluation. Evaluation generates answers greedily on 16 held-out worlds with two writes each. It compares the writer with online gradient writes into the same table (ten SparseAdam steps per evidence block, learning rate 0.01 or 0.05), the full evidence in context, and BM25 retrieval of the top two blocks. Transfer probes, without retraining, test a paraphrased question that keeps the entity name, chains of 4 and 16 writes, and revision of an earlier value.
  \item \textbf{Raw-text objective.} The writer is then trained as in \cref{sec:method-training}, on raw text with no question--answer data, on the post-trained Qwen3-0.6B (1,190,465 writer parameters, hidden width 128). A 5M-token code-only pilot with two seeds comes first, followed by a 100M-token run on the five-source mixture described below. A matched 5M-token run on the base model tests whether a writer transfers between the two readers.
\end{itemize}
Prototype and native results are reported separately throughout.

\paragraph{Native model.} The main model is Qwen3.8-Flash-Next~\citep{qwen38flashnext}: 48 layers, 512 routed experts with ten active per token, and the native table and read-out of \cref{sec:method-native}. Backbone, read-out and $\Tz$ are frozen; the writer (4,075,681 parameters) is the only trained component. We train on four H200 GPUs with expert parallelism 4 and data parallelism 4. Each update uses 12 complete trajectories and backpropagates through the whole write chain of each. The optimizer is AdamW with learning rate $3\times10^{-4}$, no weight decay and gradient clipping at 1. Two arms start from the same initial writer: $\alpha=1$ is trained to 99,997,072 target tokens (4,172 updates) and $\alpha=0.99$ to 39,998,107 (1,669 updates). The $\alpha=1$ budget was extended from a shared to a per-arm 100M after the 20M and 40M development evaluations had been seen, so the 60--100M points are exploratory. Implementation details are in \cref{app:engineering}.

\paragraph{Training data.} Raw English text and source code from six streams, mixed by target-token share: code from Stack v2 Edu (20\%), arXiv papers (25\%) and Project Gutenberg books (20\%) from the Common Pile~\citep{kandpal2025commonpile}, FineWeb-Edu web pages (15\%)~\citep{penedo2024fineweb}, CC-News articles (17.5\%)~\citep{ccnews}, and time-ordered streams of several CC-News reports on the same event (2.5\%). The corpus has 38,566 training documents and 713 event streams backed by 2,002 articles; 3,069 documents and 224 event streams are held out for validation. Trajectories are 2 to 32 blocks long depending on the source. Memory starts empty for every trajectory and persists across articles within an event stream. No question--answer data or labels are used.

\paragraph{Delta writer.} The memory-aware writer is forked from the $\alpha=1$ checkpoint at 80M and trained to 100M on exactly the 835 update groups that the main line used between 80M and 100M. Only the encoding changes ($\lambda=1$); all writer parameters are trained and the AdamW state is inherited. The main line over the same groups is the matched control.

\subsection{DEV64: memory of new events}
\label{sec:setup-dev64}

DEV64 consists of 64 event families from the second half of 2025, drawn from 146 source articles. Each family has an earlier and a later real news report on the same event. Memory is built from the earlier report, then $k\in\{0,1,4,8,16\}$ unrelated articles are written, and finally the later report is scored by teacher-forced continuation. The scored report receives the true first 128 tokens as context and is never written into memory. Three conditions differ only in what was written:
\begin{itemize}[leftmargin=*,itemsep=2pt]
  \item \textbf{self}: the family's own earlier report, followed by the $k$ distractors;
  \item \textbf{donor}: an earlier report from another family, paired by a hash fixed before training, followed by the same $k$ distractors;
  \item \textbf{empty}: nothing.
\end{itemize}
We report two differences in nats per target token:
\begin{equation}
  \Dempty=L_{\mathrm{empty}}-L_{\mathrm{self}},\qquad
  \Ddonor=L_{\mathrm{donor}}-L_{\mathrm{self}}.
\end{equation}
$\Dempty$ is the total benefit of the document's memory, including generic help; $\Ddonor$ is the part specific to the document. Forgetting can raise $\Ddonor$ by making the donor worse, so the two are always reported together. Losses are averaged within each family and then equally over families.

Three controls accompany the main numbers. The \emph{first scoring window}, the up to 384 targets right after the 128-token context, tests whether the gain is present before the later report has unfolded. A \emph{prefix split} separates families whose 128-token context restates the early facts (34), does not (23) or is unclear (7). A \emph{direction-permutation control} permutes the channels of the read-out vector at scoring time, preserving addresses, counts and row norms, to test whether the gain comes from what was written.

Intervals are percentile intervals from 10,000 family-paired bootstrap resamples; comparisons across checkpoints resample the same families. Unless stated otherwise they are pointwise and uncorrected. The weight $\times$ encoding factorial of \cref{sec:memory-encoding} uses a pre-registered analysis with 100,000 resamples and Bonferroni correction over its 60 comparisons. The pretraining cutoff of Qwen3.8 is not documented, so the model may have seen these events; the empty condition measures what it already knows.
For the prototype we also report a historical held-out set of 640 families from the five sources, scored the same way and macro-averaged over sources.

\subsection{Seven-task chain}
\label{sec:setup-seven}

We reuse the seven morphological inflection tasks of~\citep{wu2026drift}, drawn from SIGMORPHON~\citep{goldman2023sigmorphon}, in the order SME $\to$ NAV $\to$ HYE $\to$ AFB $\to$ SQI $\to$ MKD $\to$ AMH. Each task contributes 5,000 support examples (prompt and reference answer, in the chat template with thinking disabled), the same examples and exposures as the 125 Adam steps $\times$ batch 40 of~\citep{wu2026drift}. Every example is one evidence document and one D1 update, so a chain makes 35,000 writes. One state is kept for the whole chain: it is never cleared or partitioned at task boundaries, and the writer is frozen and was never tuned on these tasks. The tasks' earlier results motivated this work, so this is a re-test on historical tasks, not a fresh benchmark.

Evaluation follows~\citep{wu2026drift}:
\begin{itemize}[leftmargin=*,itemsep=2pt]
  \item \textbf{Accuracy}: 64 fixed test items per task, greedy decoding of at most 128 new tokens, exact match after stripping and Unicode NFC normalisation. Each task is also tested \emph{before} its support examples are written, so that acquisition is measured as the change from writing.
  \item \textbf{Answer NLL}: teacher-forced NLL of the reference answers.
  \item \textbf{Drift}: representations at the last prompt token at all 48 layers (four residual streams concatenated, 10,240 dimensions) and at the LM-head input (2,560 dimensions), as relative RMS change. Big-$K$ drift uses 128 math and 64 code probes against the empty state; small-$k_i$ drift compares each task's probes with the state right after that task was written.
  \item \textbf{Seen-example recall}: 448 support examples already written (64 per task) are answered from the final state.
\end{itemize}
The three arms of~\citep{wu2026drift} (N-gram table trained with Adam, shared LoRA, merged LoRA) are reused, not rerun.

Accuracy and the other metrics come from two independent runs. Accuracy is measured on A800 GPUs with vLLM, whose Qwen3.8 read-out we extend with a separate FP32 patch branch. NLL and representations are measured on H200 GPUs with the native training stack. Each machine writes its own chain from an empty state, and the two states differ by 4--6\% because of cross-stack numerics. Each writer (plain and delta) is run once per machine.

\subsection{LoCoMo}
\label{sec:setup-locomo}

LoCoMo~\citep{maharana2024locomo} has 10 long dialogues and 1,986 questions; we use the delta writer. Each dialogue is written once into an empty memory, in session order, rendered as text with a dated header per session and one ``Speaker: utterance'' line per turn, in 512-token blocks. All questions of a dialogue are answered from the same written state, which questions do not modify. Four conditions use the same question prompt: \textbf{self} (own dialogue in memory, only the question in context), \textbf{empty}, \textbf{donor} (another dialogue of similar length in memory) and \textbf{full context} (empty memory, the whole dialogue in context). The prompt instructs the model to answer ``No information available'' when it lacks the information. Decoding is greedy with at most 32 new tokens. The three memory conditions read through the FP32 split read-out of \cref{app:engineering}. The full-context condition needs no patch and runs the native BF16 read-out with expert offloading, so the two kinds of condition use numerically different read paths. The main score is LoCoMo's category-aware token F1 on the 1,540 questions of categories 1--4; the 446 adversarial questions are excluded. We also score the teacher-forced NLL of the reference answers. We do not compare these numbers directly with~\citep{memt2026}, which uses a different backbone, prompts and judging.

%% file: sections/06_memory.tex
\section{Document-Specific Memory from Raw Text}
\label{sec:memory}

On the native model the writer only ever sees raw-text continuation. This section first traces how the prototype arrived at that objective. It then asks whether raw text is enough for the writer to learn memory that is specific to the document written, robust to later writes, and carried by the content of the patch.

\subsection{Prototype I: a question-answering objective}
\label{sec:memory-proto}

\begin{table}[t]
  \centering
  \small
  \caption{Prototype I, 16 held-out two-write worlds (Qwen3-0.6B-Base). \emph{First}: the first fact asked after the first write; \emph{Retained}: the same fact asked after the second write; \emph{New}: the second fact. Greedy generation, first-word score. Empty memory answers 2/16.}
  \label{tab:proto-qa}
  \begin{tabular}{lccc}
    \toprule
    Method & First & Retained & New \\
    \midrule
    Forward writer + D1 & 16 & 16 & 15 \\
    Online gradient writes (SparseAdam, lr 0.01) & 16 & 2 & 16 \\
    Online gradient writes (SparseAdam, lr 0.05) & 16 & 4 & 16 \\
    Full evidence in context & 16 & 16 & 14 \\
    BM25 retrieval, top 2 & 16 & 16 & 14 \\
    \bottomrule
  \end{tabular}
\end{table}

Trained to answer questions, the forward writer binds facts and keeps them (\cref{tab:proto-qa}). One write raises answer accuracy from the empty-memory 2/16 to 16/16. After a second write all 16 first facts are still answered, and 15 of the 16 new ones. Online gradient writes into the same table learn each new fact but overwrite the old one, keeping only 2--4 of 16. The forward writer matches putting the evidence in context and retrieving it with BM25. This is the contrast that motivates the method: writing by gradient into shared rows forgets, while a learned forward write need not.

The same experiments expose the limits that reappear on the native model.
\begin{itemize}[leftmargin=*,itemsep=2pt]
  \item \textbf{The binding is tied to the trained question form.} Asked with a paraphrased question that still names the entity, the writer stays at the empty-memory 2/16 before and after writing. The rows read by such queries overlap the written rows in only about 6\% of positions.
  \item \textbf{Longer chains erode it.} Without retraining, four writes leave 60 of 64 facts answerable. After sixteen writes, 46 of 64 are answered right after their own write but only 19 at the end. Revised values are followed immediately in 5 of 16 worlds.
  \item \textbf{Template rows carry a trade-off.} At an earlier checkpoint, two old answers were lost after the second write. Restoring the ten rows of the literal question template to their values before that write rescues both old answers (14 to 16), but new answers fall from 12 to 2. These are the same shared rows that, on the native model, carry both gain and drift (\cref{sec:rows}).
\end{itemize}
A question-answering objective also needs labelled episodes in a fixed format. We therefore moved to the pretraining objective, which needs no labels and applies to any text.

\subsection{Prototype II: raw text}
\label{sec:memory-proto-raw}

On a 5M-token code-only pilot, two seeds agree: $\Dempty=0.036$ and $0.037$, $\Ddonor=0.024$ and $0.026$, all with intervals above zero. The writer is tied to its reader. A writer trained against the base model and used with the post-trained model keeps a gain of 0.0066 over empty memory, against 0.0363 for the writer trained on the post-trained model itself.

Trained on the five-source mixture for 100M target tokens, the prototype writer yields memory with a clear document-specific component. On the historical held-out set, the five-source macro averages are $\Dempty=0.144$ [0.141, 0.147] and $\Ddonor=0.097$ [0.095, 0.100], positive for every source. On the first block of a later article in an event stream, scored before that article is written, $\Ddonor=0.110$: memory carries across articles. On DEV64 the 100M prototype gives $\Dempty=0.060$ and $\Ddonor=0.038$ without distractors. After 16 distractors $\Ddonor$ falls to 0.012, 31\% of its undistracted value. The small table is the likely bottleneck. At $k=16$ the number of distinct addresses is already 26--31\% below the number of distinct n-gram keys, although no table is more than 62\% occupied. Most of the prototype's DEV64 gain is already present after 5M tokens.

\subsection{Native model}
\label{sec:memory-native}

\begin{figure}[t]
  \centering
  \includegraphics[width=\linewidth]{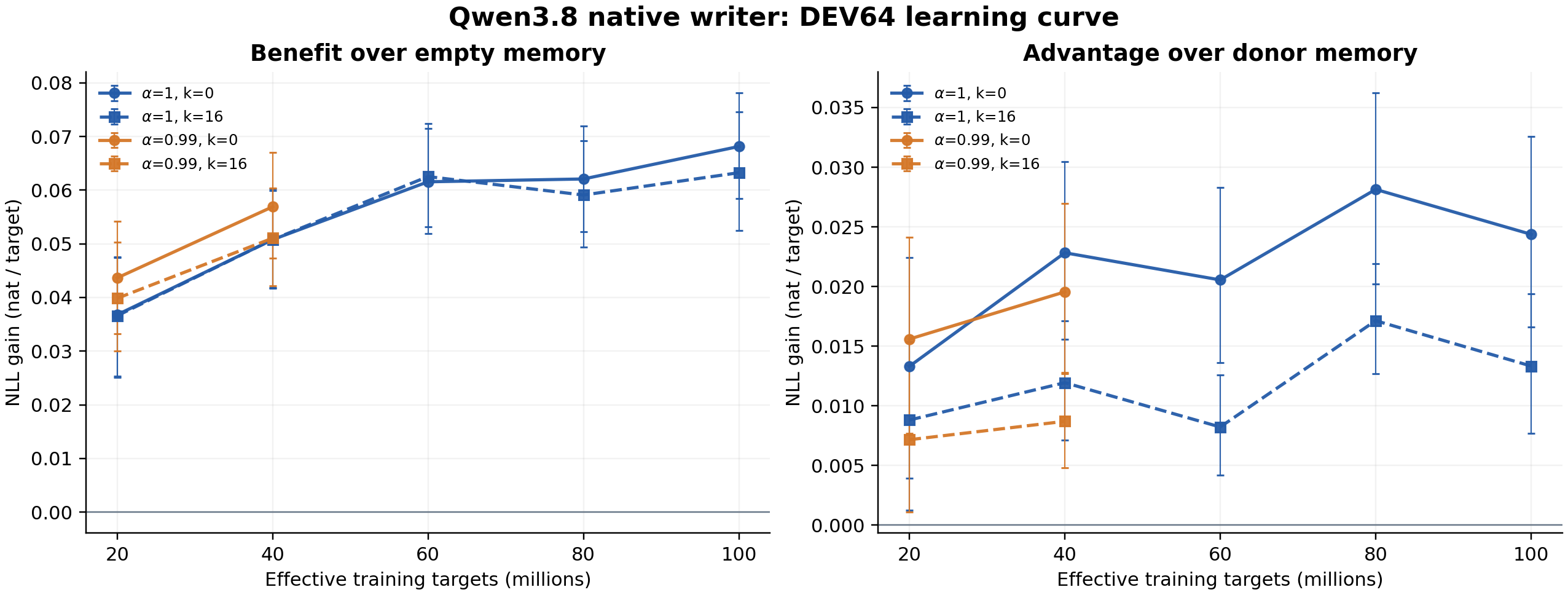}
  \caption{DEV64 during native training. Left: benefit over empty memory ($\Dempty$); right: advantage over a donor document's memory ($\Ddonor$), without distractors ($k=0$) and after 16 distractor articles ($k=16$). Bars are 95\% family-bootstrap intervals.}
  \label{fig:dev64-curve}
\end{figure}

\begin{table}[t]
  \centering
  \small
  \caption{DEV64 at the final native checkpoint ($\alpha=1$, 100M target tokens). Nats per target token, 95\% family-paired bootstrap intervals. Empty-memory NLL is 1.8409 at every $k$.}
  \label{tab:dev64-100m}
  \begin{tabular}{rccc}
    \toprule
    Distractors $k$ & $\Dempty$ & $\Ddonor$ & self NLL \\
    \midrule
    0  & 0.0682 [0.0584, 0.0781] & 0.0244 [0.0166, 0.0326] & 1.7728 \\
    1  & 0.0658 [0.0562, 0.0754] & 0.0187 [0.0124, 0.0256] & 1.7751 \\
    4  & 0.0648 [0.0545, 0.0754] & 0.0167 [0.0113, 0.0226] & 1.7761 \\
    8  & 0.0659 [0.0561, 0.0761] & 0.0170 [0.0119, 0.0225] & 1.7750 \\
    16 & 0.0633 [0.0524, 0.0746] & 0.0133 [0.0077, 0.0194] & 1.7777 \\
    \bottomrule
  \end{tabular}
\end{table}

On the native model the ordering self $<$ donor $<$ empty in NLL holds at every distractor distance and every training stage (\cref{fig:dev64-curve}). \Cref{tab:dev64-100m} gives the final checkpoint. Reading the document's own memory lowers continuation NLL by 0.068 nats/token relative to empty memory and by 0.024 relative to another document's memory, with all intervals above zero.

\paragraph{Robustness to later writes.} After 16 unrelated articles, 93\% of $\Dempty$ remains (0.063) and 55\% of $\Ddonor$ (0.013). The decline is significant for both ($-0.0049$ [$-0.0101$, $-0.0001$] and $-0.0110$ [$-0.0194$, $-0.0031$], paired within family). It is much smaller than on the prototype, where $\Ddonor$ kept 31\%.

\paragraph{The gain is present early and comes from the written content.} In the first scoring window, before the later report has unfolded, $\Dempty=0.052$ and $\Ddonor=0.018$ at $k=0$, and 0.046 and 0.010 at $k=16$. Families whose 128-token context does not restate the early facts still gain ($\Ddonor=0.020$ at $k=0$, 0.012 at $k=16$, both intervals above zero). Permuting the channels of the read-out vector costs the self condition more than the donor condition: the difference of the two permutation costs is 0.028 [0.020, 0.037] at $k=0$ and 0.017 [0.012, 0.023] at $k=16$. The direction of what was written carries information specific to the document.

\paragraph{Training dynamics.} The two quantities develop differently with training. $\Dempty$ keeps rising: 0.037, 0.051, 0.062, 0.062 and 0.068 at 20, 40, 60, 80 and 100M, and from 40M to 100M self NLL improves by 0.017 [0.013, 0.023]. $\Ddonor$ rises from 20M to 40M (0.013 to 0.023) and then fluctuates between 0.021 and 0.028. Its change from 40M to 100M is not significant at any distance, because donor NLL improves almost as much as self NLL. Continued raw-text training thus mainly strengthens help that transfers across documents, while the document-specific part plateaus after about 40M tokens. The checkpoint with the highest $\Ddonor$ is 80M (0.028 at $k=0$, 0.017 at $k=16$). We report the 100M endpoint, fixed by the budget, rather than select on DEV64. The absolute gains are smaller than on the prototype, as expected for a much stronger backbone whose empty-memory NLL is already 1.84 and which may have seen these events.

\subsection{Decay toward the pretrained table}
\label{sec:memory-alpha}

On the native model, $\alpha=0.99$ gives more total benefit at short distances. At equal budgets of 20M and 40M its $\Dempty$ is higher than that of $\alpha=1$ for $k\le8$ (at $k=0$: by 0.0069 [0.0030, 0.0111] and 0.0061 [0.0024, 0.0101]); at $k=16$ the difference is not resolved. Differences in $\Ddonor$ are not significant at either budget. Decay thus trades persistence for help on recent material. Since persistence is the point of the method, the main configuration is $\alpha=1$.

The prototype shows why such comparisons need both metrics. A decay rate predicted per evidence block collapses below 0.4, so memory becomes roughly a cache of the last block. $\Ddonor$ then rises by 20\% while self NLL gets worse and the cross-article gain halves (0.091 to 0.049). Forgetting inflates $\Ddonor$ by degrading the donor.

\subsection{Memory-aware encoding}
\label{sec:memory-encoding}

\paragraph{Stability.} We measure the quantities of \cref{sec:analysis-mem} on the 80M writer, using the next 12 training trajectories, six of which write 4--15 times. The encoding-substitution sensitivity $G$ (\cref{eq:G}) has median 0.30, 99th percentile 0.57 and maximum 0.59 over 49 blocks, and stays in the same range during fine-tuning (median 0.27--0.42). Twin trajectories with memory-aware encoding, started from states that differ by $10^{-8}$ to $10^{-2}$ of their norm, reach a plateau of 1.7--5.3\% relative difference within 2--4 writes and stay there for up to 15 writes. The plateau does not depend on the initial difference, which is what the routing discontinuity of \cref{sec:analysis-mem} predicts; we have not confirmed that mechanism with fixed routing. With $\Tz$-only encoding a 0.1\% perturbation shrinks to about 0.89 of its size, and independent repeats agree to within $10^{-8}$. The effect of either kind of difference on query NLL stays within the read-side numerical noise, which is present under both encodings (after a 1\% perturbation, mean change $+0.0005$ and $-0.0041$ nats/token on 512-token queries). Batch layout matters as well: encoding the same evidence in a different batch layout changes the writer's features by a median 7.6\%.

\paragraph{Training.} Fine-tuned with memory-aware encoding on the same 835 update groups as the main line, the delta writer's training loss is slightly worse throughout (paired $+0.00059$ [0.00041, 0.00070] nats/token). On DEV64, $\Ddonor$ rises by 0.006--0.010 and $\Dempty$ falls by 0.009--0.017 relative to the main line. Weights, encoding at evaluation and batch layout all changed at once, so we separate them.

\begin{table}[t]
  \centering
  \small
  \caption{Weight $\times$ encoding factorial on DEV64, all cells at batch size 1. Entries are $\Dempty$ / $\Ddonor$ in nats per target token. ``Main'': plain writer at 100M; ``delta'': memory-aware fine-tune at 100M.}
  \label{tab:factorial}
  \begin{tabular}{llccccc}
    \toprule
    Weights & Encoding & $k=0$ & $k=1$ & $k=4$ & $k=8$ & $k=16$ \\
    \midrule
    main  & $\Tz$-only     & .0668 / .0234 & .0654 / .0179 & .0635 / .0168 & .0655 / .0163 & .0626 / .0127 \\
    main  & memory-aware   & .0626 / .0270 & .0627 / .0206 & .0648 / .0172 & .0613 / .0152 & .0647 / .0122 \\
    delta & $\Tz$-only     & .0560 / .0315 & .0555 / .0296 & .0550 / .0271 & .0516 / .0249 & .0538 / .0219 \\
    delta & memory-aware   & .0517 / .0320 & .0522 / .0290 & .0520 / .0252 & .0492 / .0232 & .0538 / .0217 \\
    \bottomrule
  \end{tabular}
\end{table}

\paragraph{Weight $\times$ encoding factorial.} \Cref{tab:factorial} crosses the two sets of weights with the two encodings at evaluation, all at the same batch layout. Under Bonferroni correction over 60 comparisons, 17 are significant:
\begin{itemize}[leftmargin=*,itemsep=2pt]
  \item \textbf{The shift lives in the weights.} With the encoding held fixed, changing the weights lowers $\Dempty$ significantly at all five distances under both encodings (by 0.008--0.014 with $\Tz$-only encoding) and raises $\Ddonor$ (by 0.008--0.012 with $\Tz$-only encoding, significant at $k\ge1$; with memory-aware encoding significant only at $k=16$). Under $\Tz$-only encoding, 70\% of $\Ddonor$ survives 16 distractors with the delta weights, against 54\% with the main weights.
  \item \textbf{Switching the encoding at evaluation brings no detected gain.} With the weights held fixed, memory-aware encoding lowers $\Dempty$ significantly only at $k=0$, and no corrected comparison shows a gain. No interaction term excludes zero. The layout control (main weights and $\Tz$-only encoding at batch 12 versus batch 1) differs by at most 0.0013 in point estimate.
\end{itemize}
The larger $\Ddonor$ must be read together with the absolute losses. At $k=0$, changing the weights moves self NLL from 1.7741 to 1.7849 and donor NLL from 1.7975 to 1.8164. Both get worse, and the gap grows because the donor gets worse faster. Training with memory-aware encoding therefore changed \emph{how} the writer writes: memory became more specific to its document and more robust to distractors, but less helpful overall on these news events. On DEV64 it did not make the writer better at predicting the document it wrote. Over the long seven-task chain, the same writer drifts less than the plain writer while answering slightly more (\cref{sec:longchain-drift}). This is a single training run, and the mechanism is unknown.

%% file: sections/07_longchain.tex
\section{Long Write Chains: Stability, Retention and Acquisition}
\label{sec:longchain}

The seven-task chain puts the writer far outside its training distribution. Training trajectories have at most 32 blocks of raw text. Here a single state receives 35,000 writes of short, templated examples from tasks the writer has never seen. This section reports what the chain does to stability, to what is stored, to what is acquired and to the base.

\subsection{Stability}
\label{sec:longchain-stability}

Both writers completed 35,000 consecutive writes on both stacks without numerical failure. Every state passed its integrity checks before and after each evaluation, and queries never modified the state. The delta writer, whose encoding reads its own memory, showed no runaway self-reinforcement over the chain: its final accuracy and drift are close to those of the plain writer (below). The two stacks write numerically different states from the same evidence. At the end of the chain $\norm{\Delta_{\mathrm{A800}}-\Delta_{\mathrm{H200}}}/\norm{\Delta_{\mathrm{H200}}}$ is 6.3\% for the plain writer and 4.2\% for the delta writer, while row sets, counts and clocks agree exactly. This is why accuracy (A800) and drift (H200) are reported as two trajectories.

\subsection{Acquisition and retention}
\label{sec:longchain-acc}

\begin{table}[t]
  \centering
  \small
  \caption{Seven-task accuracy (out of 64 per task) before the task's own examples are written $\to$ right after they are written $\to$ after the seventh task. The last column is the N-gram arm of~\citep{wu2026drift} right after learning each task.}
  \label{tab:seven-acc}
  \begin{tabular}{lccc}
    \toprule
    Task & Plain writer & Delta writer & \citep{wu2026drift} N-gram \\
    \midrule
    SME & 10 $\to$ 7 $\to$ 6   & 10 $\to$ 7 $\to$ 7   & 13 \\
    NAV & 6 $\to$ 5 $\to$ 5    & 6 $\to$ 6 $\to$ 6    & 6 \\
    HYE & 33 $\to$ 28 $\to$ 29 & 32 $\to$ 32 $\to$ 32 & 61 \\
    AFB & 5 $\to$ 2 $\to$ 2    & 6 $\to$ 2 $\to$ 2    & 42 \\
    SQI & 24 $\to$ 27 $\to$ 25 & 27 $\to$ 27 $\to$ 26 & 50 \\
    MKD & 29 $\to$ 27 $\to$ 28 & 26 $\to$ 27 $\to$ 26 & 54 \\
    AMH & 31 $\to$ 30          & 31 $\to$ 30          & 57 \\
    \midrule
    Total after task 7           & 125/448 (27.9\%) & 129/448 (28.8\%) & 255/448 (56.9\%) \\
    Net acquisition              & $-12/448$ & $-7/448$ & --- \\
    Change after learning (tasks 1--6) & $-0.26$pp & $-0.52$pp & $-7.29$pp \\
    \bottomrule
  \end{tabular}
\end{table}

\Cref{tab:seven-acc} separates acquisition from retention. Writing a task's own 5,000 examples does not raise its accuracy. Summed over the seven tasks, the change from writing is $-12/448$ for the plain writer and $-7/448$ for the delta writer (5 and 4 items turned correct, 17 and 11 turned wrong). Two tasks improve: SQI for the plain writer (24 to 27) and MKD for the delta writer (26 to 27). After the seventh task the chains hold 125/448 and 129/448, against 255--309/448 for the three arms of~\citep{wu2026drift}. The two writers do not differ significantly (paired McNemar $p=0.42$).

What is stored changes little afterwards. From the moment each task is written to the end of the chain, accuracy on the first six tasks changes by $-0.26$pp (plain) and $-0.52$pp (delta), against $-7.29$pp for the Adam-trained N-gram arm, $-11.72$pp for shared LoRA and $-7.03$pp for merged LoRA. Answer NLL tells the same story. Writing a task raises its answer NLL by 0.069 (plain) and 0.081 (delta) on average; afterwards, NLL on the first six tasks changes by $-0.001$ and $-0.019$, against $+0.305$ for the N-gram arm of~\citep{wu2026drift}.

The two halves must be read together. Little changes afterwards largely because little was acquired. These numbers show that later writes did not erode what the state held; they do not show that the method learns and then keeps. The positive evidence for storing and keeping new information is the document-specific memory of \cref{sec:memory}.

\subsection{Drift}
\label{sec:longchain-drift}

\begin{table}[t]
  \centering
  \small
  \caption{Small-$k_i$ drift (relative RMS) at the last prompt token, from right after each task is written to the end of the chain (plain writer, H200). Last column: N-gram arm of~\citep{wu2026drift} at the head input after task 7.}
  \label{tab:seven-ki}
  \begin{tabular}{lcccc}
    \toprule
    Task & Head input & Layer 48 & Layer 9 & \citep{wu2026drift} N-gram \\
    \midrule
    SME & 23.3\% & 42.8\% & 290\% & 40.4\% \\
    NAV & 21.3\% & 32.7\% & 211\% & 62.0\% \\
    HYE & 12.4\% & 21.1\% & 52\%  & 32.6\% \\
    AFB & 11.2\% & 21.3\% & 55\%  & 37.3\% \\
    SQI & 19.9\% & 31.7\% & 180\% & 36.8\% \\
    MKD & 19.2\% & 33.6\% & 265\% & 26.0\% \\
    \midrule
    Mean & 17.9\% & 30.6\% & --- & 39.2\% \\
    \bottomrule
  \end{tabular}
\end{table}

\paragraph{Small $k_i$.} At the head input, the learned tasks drift by 17.9\% on average from the end of their own writing to the end of the chain (\cref{tab:seven-ki}), against 39.2\% for the N-gram arm of~\citep{wu2026drift}. The delta writer gives 17.5\%. The comparison holds at matched rounds. For SME after one, two and three further tasks: 12.8\%, 18.7\% and 19.4\%, against 32.5\%, 40.1\% and 41.6\%. For NAV after one and two further tasks: 16.3\% and 17.2\%, against 50.8\% and 51.8\%. Inside the network the picture differs: at layer~9, four of the six tasks drift by more than 180\%. The small output-side drift sits on top of large early-layer changes.

\paragraph{Big $K$.} On general inputs the chain drifts more than the N-gram arm of~\citep{wu2026drift}. At the head input, math probes drift by 41.8\% and code probes by 23.8\% from the empty state (mean 32.8\%), against 26\%/15\% (mean 20.75\%) for that arm. This is still below merged LoRA (49.67\%) and shared LoRA (68.12\%). The two probe sets reach this level in different ways:
\begin{itemize}[leftmargin=*,itemsep=2pt]
  \item \textbf{Math} drift enters at the injection. It is 0 at layer~1, before the injection, and 198\% at layer~2 after the first task; it then decays with depth to 42.5\% at the head. It is fully present after the first task (5,000 writes) and does not accumulate: after the seventh task it is 188\% at layer~2 and 41.8\% at the head.
  \item \textbf{Code} drift is 1--3\% at layer~2 and builds up in later layers through attention, reaching 24\% at the head.
\end{itemize}
The math pattern is what a few hot rows read directly at the probe position would produce, and \cref{sec:rows} confirms it. Big-$K$ drift is not a behavioural loss here: math answer NLL falls from 0.904 (empty) to 0.846, and code NLL is unchanged (0.247 to 0.246). The rows that move the math representations also help predict the math answers (\cref{sec:rows-zero}). We nevertheless count 32.8\% as a defect. Drift on general inputs is a budget the method must stay within, not a cost to be offset by gains.

\begin{table}[t]
  \centering
  \small
  \caption{Plain versus delta writer on the seven-task chain, after the seventh task unless stated. Drift is relative RMS (H200); accuracy and recall are out of 448 (A800).}
  \label{tab:writers}
  \begin{tabular}{lcc}
    \toprule
    & Plain writer & Delta writer \\
    \midrule
    Math big $K$, layer 2 & 188.0\% & 175.7\% \\
    Math big $K$, layer 2, after task 4 & 153.0\% & 94.8\% \\
    Math big $K$, layer 9 & 135.0\% & 125.7\% \\
    Math big $K$, layer 48 & 51.2\% & 49.3\% \\
    Head big $K$, math / code & 41.8\% / 23.8\% & 40.2\% / 22.5\% \\
    Small $k_i$ at the head, mean of six tasks & 17.9\% & 17.5\% \\
    Old-task answer NLL, change after learning & $-0.001$ & $-0.019$ \\
    Net acquisition & $-12$ & $-7$ \\
    Accuracy after task 7 & 125 & 129 \\
    Recall of written examples & 103 & 112 \\
    \bottomrule
  \end{tabular}
\end{table}

\paragraph{Plain versus delta writer.} The delta writer, which reads its memory while encoding, drifts less than the plain writer over the chain (\cref{tab:writers}). On the math probes its drift is smaller at every reported depth at the end of the chain. The gap is largest in early layers: after the fourth task, layer-2 drift is 94.8\% against 153.0\%, and layer-9 drift is 69.4\% against 108.8\%. At the head input its big $K$ is smaller at six of seven endpoints on math and at all seven on code. On learned tasks its small-$k_i$ drift is lower for four of six tasks, and their answer NLL improves by 0.019 after learning, against 0.001. The lower drift costs nothing in accuracy: net acquisition is $-7$ against $-12$, 129 against 125 test items are answered after the seventh task, and 112 against 103 written examples are recalled. \Cref{sec:rows-zero} repeats the comparison after zeroing the shared rows.

\subsection{Recall of written examples}
\label{sec:longchain-recall}

Weak acquisition could mean the state stores the examples but does not generalise from them. We test this directly. From the final states, both writers answer the same 448 support examples that were written into memory (64 per task), using their original prompts. The plain writer answers 103/448 (22.99\%) and the delta writer 112/448 (25.00\%). The difference is +9 items (paired McNemar $p=0.049$, uncorrected, exploratory). Neither state reproduces a large share of what it was given, so the data do not support ``remembered but not generalised''. The writes did not bind each answer to an address from which the answer can be read back (\cref{sec:rows-entry}).

\subsection{LoCoMo}
\label{sec:longchain-locomo}

\begin{table}[t]
  \centering
  \small
  \caption{LoCoMo, categories 1--4 (1,540 questions), delta writer. F1 on 0--100; reference-answer NLL in nats/token, averaged per question. Intervals: 10,000 dialogue-cluster bootstrap resamples.}
  \label{tab:locomo}
  \begin{tabular}{lcc}
    \toprule
    Condition & F1 & Answer NLL \\
    \midrule
    self (own dialogue in memory)   & 0.44 & 4.034 \\
    empty                           & 0.44 & 4.018 \\
    donor (other dialogue in memory)& 0.44 & 4.049 \\
    full context                    & 56.79 [55.33, 58.34] & --- \\
    \midrule
    self $-$ empty                  & 0 & $+0.016$ [$-0.006$, $+0.035$] \\
    self $-$ donor                  & 0 & $-0.015$ [$-0.025$, $-0.004$] \\
    \bottomrule
  \end{tabular}
\end{table}

Writing the ten dialogues takes 426 block writes (215,278 tokens), about 12 seconds per 23,000-token dialogue on A800. With only the question in context, all three memory conditions answer every question with the same refusal, ``No information available''. No greedy output differs between self, empty and donor, and F1 is 0.44 in each. The same backbone with the whole dialogue in context reaches 56.79 (\cref{tab:locomo}).

Reference-answer NLL separates the sources of memory without showing a benefit. Self is lower than donor ($-0.015$, interval below zero), so what was written depends on the dialogue. Relative to empty memory, the self difference crosses zero under both per-question and per-token averaging, and its sign depends on the averaging. Split by position, both self and donor make the first answer token worse than empty memory ($+0.13$ and $+0.17$). Given the correct prefix, both improve the remaining tokens slightly ($-0.020$ and $-0.016$), so this improvement is not specific to the dialogue that was written. These NLLs use a single-prefill path that differed from the token-by-token path by up to 0.34 nats in a pre-run check, so they are specific to that path. \Cref{sec:rows-entry} offers one explanation for the first-token result: the rows read where the answer starts were never written. This is consistent with the result but does not by itself establish its cause.

Zero-shot transfer from raw-text continuation to dialogue question answering therefore fails for this writer. Using written content at a different position from where it was written is the main capability the method lacks.

%% file: sections/08_rows.tex
\section{Locating Drift at the Row Level}
\label{sec:rows}

In~\citep{wu2026drift} drift could be measured but not attributed: Adam updates had spread over many rows of a table that no longer stored its pretrained values. Here every change is a patch on a known row, the addresses are deterministic hashes of tokens, and the plain writer is row-separable (\cref{prop:separable}). This makes it possible to say which rows a probe reads, how often each was written, how far it moved, and what removing it does.

\subsection{Which rows the math probes read}
\label{sec:rows-audit}

Using the native hash and tokenizer, we reconstruct the 16 physical rows read at the last position of every math probe and look them up in the saved states; this is a CPU audit with no model forward. All 128 math prompts end with the tokens \texttt{Solution}, \texttt{:} and a newline. Of the 16 rows read there:
\begin{itemize}[leftmargin=*,itemsep=2pt]
  \item the eight trigram rows for (\texttt{Solution}, \texttt{:}, newline) were never written: count and patch are zero;
  \item the eight bigram rows for (\texttt{:}, newline) are the rows of \texttt{Input:} followed by a newline, which occurs in every support example of the seven tasks. Each example writes each of these rows exactly once, with no further hash collisions. After the first task their counts are 5,000, after the seventh 35,000.
\end{itemize}
Their patches are several times larger than the pretrained rows: $\norm{\Delta_j}/\norm{\Tz_j}$ ranges over 4.48--10.75 after the first task and 4.41--10.03 after the seventh for the plain writer, and 4.61--11.74 after the first task for the delta writer. The level is reached within the first task. From the first to the seventh endpoint the concatenated patch of the eight rows changes by 8.8\% (plain writer) and keeps its direction (cosine 0.997). This matches the layer profile of \cref{sec:longchain-drift}: math drift appears at the injection after the first task and does not accumulate. Boundedness (\cref{prop:bounded}) holds throughout; what is missing is anything that ties the size of a patch to $\norm{\Tz_j}$.

\subsection{Zeroing the eight shared rows}
\label{sec:rows-zero}

\begin{figure}[t]
  \centering
  \includegraphics[width=\linewidth]{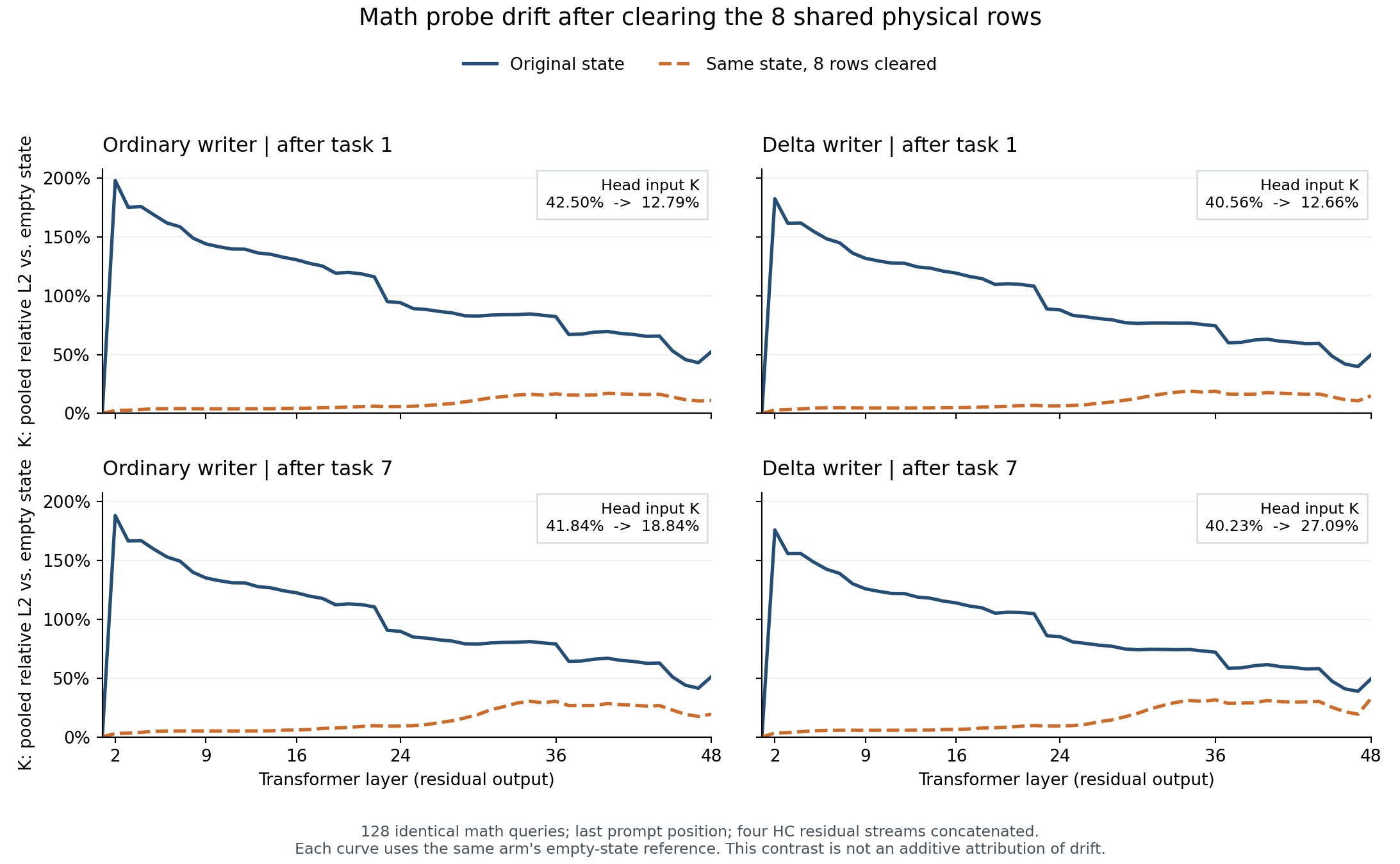}
  \caption{Big-$K$ drift of the 128 math probes at every layer, before (solid) and after (dashed) zeroing the patches of the eight shared bigram rows in the same state. Each curve is measured against the same writer's empty state.}
  \label{fig:math8}
\end{figure}

\begin{table}[t]
  \centering
  \small
  \caption{Zeroing the patches of the eight shared bigram rows. Big-$K$ drift of the math probes (relative RMS against the empty state) and their reference-answer NLL. Empty-state math NLL is 0.9039 for both writers.}
  \label{tab:math8}
  \begin{tabular}{lcccc}
    \toprule
    State & Layer 2 & Layer 9 & Head input & Math NLL \\
    \midrule
    Plain, task 1 & 197.7\% $\to$ 2.5\% & 143.9\% $\to$ 3.9\% & 42.5\% $\to$ 12.8\% & 0.841 $\to$ 0.901 \\
    Plain, task 7 & 188.0\% $\to$ 2.9\% & 135.0\% $\to$ 5.1\% & 41.8\% $\to$ 18.8\% & 0.846 $\to$ 0.902 \\
    Delta, task 1 & 182.3\% $\to$ 2.9\% & 131.7\% $\to$ 4.6\% & 40.6\% $\to$ 12.7\% & 0.849 $\to$ 0.902 \\
    Delta, task 7 & 175.7\% $\to$ 3.3\% & 125.7\% $\to$ 5.7\% & 40.2\% $\to$ 27.1\% & 0.855 $\to$ 0.901 \\
    \bottomrule
  \end{tabular}
\end{table}

We set the patches of these eight rows to zero in a copy of each state and keep $\Tz$, every other row and all other fields. Then we rerun the 128 math probes at all seven endpoints of both writers. For the plain writer, \cref{prop:separable} makes this the state the chain would have reached had these rows never been committed. For the delta writer it is an intervention on the final state only.

\Cref{fig:math8} and \cref{tab:math8} show the result. Layer-2 drift falls by 98.1--98.7\%, from 176--198\% to 2.5--3.3\%. Layer-9 drift falls from 126--144\% to 4--6\%. These eight rows are the source of the shallow drift on the math probes. Drift that remains at the head (12.7--27.1\%) builds up in later layers from other rows and from propagation through attention, as on the code probes. The intervention also compares the two writers. Zeroing the eight rows lowers the delta writer's final head drift from 40.2\% to 27.1\%, and the plain writer's from 41.8\% to 18.8\%, although the delta writer's patches on these rows are no smaller (4.6--11.7 times $\norm{\Tz_j}$ after the first task). Drift norms do not add across rows, so these numbers do not apportion the drift between the eight rows and the rest. For the delta writer the zeroed state is an intervention on its final state, not a state its chain would have reached (\cref{sec:analysis-sep}).

The same rows carry a gain. Zeroing them raises math answer NLL in all 14 writer $\times$ endpoint conditions, by 0.026--0.060, to 0.901--0.902, close to the empty-memory level of 0.904. Shared rows are therefore not simply contamination. With the current write rule, the generic gain on math inputs and the drift on them come from the same few patches, written by format tokens of unrelated tasks. Zeroing these patches removes both. Whether a nonzero cap can bound the drift and keep part of the gain has not been tested. This is the trade-off that a per-row budget must manage. Rows whose patches exceed their pretrained norms by a set factor are the ones a norm cap $\norm{\Delta_j}\le c\,\norm{\Tz_j}$ would bound. Row separability gives the effect of capping the final state directly; a cap enforced at every write requires replaying the write chains of those rows (\cref{sec:analysis-sep,sec:discussion}).

\subsection{The answer entry}
\label{sec:rows-entry}

The same audit explains part of the failure to use memory at a different position. In both the seven-task prompts and the LoCoMo prompts, the first answer token is predicted at the end of the chat template, after \texttt{</think>} and two newlines. All 448 recall prompts of \cref{sec:longchain-recall} and all 1,986 LoCoMo questions read the same 16 physical rows there.
\begin{itemize}[leftmargin=*,itemsep=2pt]
  \item In the seven-task chain the support examples use the same template, so these 16 rows are written throughout the chain. Answers to different lemmas and features meet in the same rows, where later writes can mix with or overwrite earlier ones.
  \item In LoCoMo, the dialogue text written into memory contains no template, so in all ten dialogue states these 16 rows were never written: their patches and counts are zero. Across the 426 blocks and 3.4 million address events of the dialogues, no write lands on them.
\end{itemize}
Memory can still reach the first answer token, but only indirectly: patches read at earlier prompt positions must propagate through attention and the convolution. The writer was trained on raw-text continuation, where the useful pattern is to write at a position and read at the same n-gram later. It was never trained to make content written in one context readable from an answer position whose own rows carry nothing. A plausible explanation for both the weak recall of \cref{sec:longchain-recall} and the LoCoMo result is therefore this: the writer learned to write and read at the same position, while question answering needs retrieval across positions. We regard this as the main open problem of the method rather than a property of the addressing alone. The writer's features include right context and a block-level pool, so it can see the answer when it writes earlier rows, and queries can attend to rows read earlier in the prompt.

%% file: sections/09_discussion.tex
\section{Discussion}
\label{sec:discussion}

\paragraph{Where memory should live.} The native N-gram table is sparse and addressable, and the model already reads it. Writing there needs no new memory module, no change to the backbone and no external store. If future models ship with such tables, the table may be the natural place for continual learning: a writable store of knowledge inside the model that leaves the backbone untouched.

\paragraph{Isolation versus binding.} Exact-token addresses are what make the method's retention and attribution possible. A write touches only the rows its n-grams address, and an input whose prefix reads none of them is unchanged. The same property limits use. A row can only help when the same n-gram is read again, so content written at one position is not directly available at another; \cref{sec:rows-entry} shows the answer position reading rows that were never written. Content addressing would give binding, but writes would overlap again, and with overlap comes interference. This is the central trade-off of the approach. Our results suggest that what is missing is a writer trained to make written content usable across positions, and possibly a memory module with its own attention for reading it (see the last paragraph of this section). The next step is an episodic objective, in which evidence is written and then queried at other positions, with distillation from the same frozen model given the evidence in context. That distillation is the component whose removal costs MemT the most~\citep{memt2026}. A CPU check on existing LoCoMo losses can decide first whether the n-grams of the answers are reachable at all, or whether a small content-addressed memory is needed beside the table.

\paragraph{Drift as a budget.} Pretrained knowledge is the most valuable thing the model has, so absolute drift on general inputs must be bounded and not traded against gains. The method already makes drift locatable: we know which rows moved, how far and through which tokens. The default should therefore include a per-row cap,
\begin{equation}
  \Delta_j\leftarrow\Delta_j\cdot\min\!\Big(1,\ \frac{c\,\norm{\Tz_j}}{\norm{\Delta_j}+\varepsilon}\Big),
\end{equation}
with $c$ fixed in advance by the drift budget for big $K$ and small $k_i$, not tuned for gains. Since the read-out is injected at layer~2, bounding each patch relative to its pretrained row bounds the perturbation at the injection directly. \Cref{sec:rows-zero} shows the cost: the hot rows also carry the generic gain, and zeroing them removes it; how much of it a nonzero cap keeps is untested. Recovering gains inside the budget is the role of \emph{multi-slot rows}. The few most-written rows would get several slots keyed by context, so that a generic signal and specific content are stored apart and used only where the context matches; all other rows keep a single value. A larger hash table does not help, because it removes collisions but not the sharing of one n-gram by many contexts. For the plain writer, the effect of locking any set of rows can be computed from existing final states. A cap enforced during writing needs only the write chains of the capped rows to be replayed.

\paragraph{Encoding with or without memory.} Encoding evidence with the $\Tz$-only model is a choice, not a necessity. It anchors targets to the evidence, keeps rows separable, makes training and use identical, and lets all blocks be encoded in one batch, which on Qwen3.8 is the difference between about 3,200 and 940 target tokens per second. The cost is that the writer cannot interpret new content in the light of what it has stored, for instance resolving a pronoun to an entity written earlier. Memory-aware encoding removes that limitation. It serialises writing and makes the state sensitive to small numerical differences, though in our measurements these stayed on a bounded plateau below read-side noise (\cref{sec:memory-encoding}). The factorial shows that training with it changes how the writer writes. Over 35,000 writes, the resulting writer drifts less than the plain writer, most clearly in early layers, with slightly higher accuracy and recall (\cref{sec:longchain-drift}). On news continuation it trades total gain for document specificity. The two writers therefore remain two lines of the method. The plain writer gives batch training, row-level attribution and reproducibility. The delta writer is the route to integrating new evidence with what is already stored. What it still lacks is a training signal that rewards such integration: evidence spread over several related documents, with corrections and revisions, rather than continuation of one document at a time.

\paragraph{Who decides what is written.} Every token of the evidence produces write events; how much each writes is decided by the learned gate, not by hand-written masks. Raw-text training teaches the writer what helps later continuation. It does not teach what a later question will need, and it does not teach restraint on format rows that every example shares. The one boundary still set by hand is what counts as evidence. User documents, user corrections and verified outcomes count. The model's own generations are not written back by default, because a persistent write that conditions all of a user's later requests is where self-reinforcement would do damage.

\paragraph{Memory that belongs to the user.} Each user's memory is a separable set of sparse rows, so it can be kept on the client and supplied per request, while the backbone and writer are shared. A dialogue of 13,000--25,000 tokens gives 78--145 MB of FP32 rows before any compression. A client can only change its own state; merging the patches of many users into a shared one is where privacy and poisoning questions arise. A portable state must carry its compatibility information: the backbone, tokenizer and writer versions, the addressing scheme, the write operator and decay, and a schema and layout version with a content checksum. A resumable full state, with counts and clock, must be distinguished from a read-only snapshot for inference.

\paragraph{Fast writing, slow consolidation.} The method is the fast half of a two-speed system. In use, it writes sparse rows quickly. During a ``sleep'' phase, signals that have proved stable and general, such as the generic gain now carried by hot rows, would be consolidated into a protected shared part, and specific content would stay in sparse rows. This is the division of labour between hippocampus and neocortex in complementary learning systems~\citep{mcclelland1995cls}, and the starting point of our 2023 work on online LoRA updates with sleep cycles. A forward write path is also what a self-improving system needs: the model writes its experience into its own memory and works with that memory afterwards. Connecting the two is the subject of our next paper.

\paragraph{Limitations.} Two experiments show that the method does not yet work on long-chain tasks that need written content at a different position. On the seven-task chain, net acquisition is negative and only 23--25\% of written examples can be recalled. On LoCoMo, every memory condition answers with the same refusal. Our writers went through only the first half of MemT-1.0's recipe~\citep{memt2026}: continuation on raw text (100M tokens), with no post-training. MemT follows its 20B tokens of continued pretraining with 14B tokens of post-training on memory-specific data, distilled from the frozen backbone given the full history. That stage adds about 10.69 points in its ablation, and it targets exactly the cross-position use our writers lack. Hot rows have no budget relative to their pretrained norms, and general-input drift is 1.6 times that of the N-gram arm of~\citep{wu2026drift}. Budget limited us to one model with a native N-gram table. We did not train writers for LongCat-Flash-Lite, the model edited by EngramEdit~\citep{engramedit2026}, which adds its n-gram embeddings at the input without a gate. Nor did we train one for DeepSeek-V4.1-Flash-1017, which injects two tables at different layers. A writer is tied to its backbone, so each would need its own; the second would also test the drift bound with more than one injection point. All results come from one training seed.

\paragraph{Why the writer learns only same-position continuation.} The most likely reason, still to be tested, is the training objective. Raw-text continuation rewards writing, at the row of an n-gram $k$, what follows $k$, and reading it back at the same position when $k$ next appears. DEV64 continuation uses memory in exactly this way, which is why it works there. Question answering instead needs information written at the positions of the question's content to be moved by attention to the answer position, and the writer was never asked to write for that. Two observations point the same way. The 0.6B prototype trained with a question-answering objective answers 16 of 16 facts after one write, against 2 of 16 with empty memory (\cref{sec:memory-proto}); after the switch to raw text this ability was not kept. And in \citep{wu2026drift}, gradient updates of the same table reached 56.9\% on the seven tasks through the same frozen attention. So the table, read through frozen attention, can carry the answers when it is trained with answer gradients. Whether a forward writer trained for cross-position use can reach that, or whether reading across positions also needs a memory module with its own attention, as in MemT-1.0~\citep{memt2026}, remains open.

%% file: sections/10_conclusion.tex
\section{Conclusion}
\label{sec:conclusion}

Delta N-gram writes into a language model's own N-gram table with forward passes only. It needs no gradients, labels or optimizer state at use time and never touches the pretrained table. A 4M-parameter writer trained on raw text with the pretraining objective learns document-specific memory that survives later writes. A second writer, which reads its own memory while encoding evidence, writes memory that is more specific to its document and drifts less over long chains. Both stay stable over chains of 35,000 writes. Because every change is a patch on a known row, drift can be traced to and intervened on at individual rows, which is what turns a drift budget from a measurement into something that can be enforced. Two problems remain. First, written content is not yet usable at other positions; one explanation still to be tested is that the answer position reads rows the evidence never wrote. Second, a few format rows shared by every example carry both the generic gain and the drift on general inputs. A cross-position training objective, a per-row cap set by the drift budget, and context-keyed slots for hot rows address these two problems; they are the next steps.

%% file: sections/A_appendix.tex
\section{Native Implementation and Reproducibility}
\label{app:engineering}

\subsection{Training on Qwen3.8}
\label{app:training}

\paragraph{Layout.} Training uses tensor and pipeline parallelism 1, expert parallelism 4 and data parallelism 4 on four H200 GPUs, with three trajectories per GPU and twelve per update. Gradients are summed across GPUs and normalised once by the number of target tokens in the whole group. The pretrained table holds about 51 billion parameters and is kept in CPU memory.

\paragraph{Four-stage schedule.} With $\Tz$-only encoding, $\hat H$ does not depend on the memory, which allows a staged schedule:
\begin{enumerate}[leftmargin=*,itemsep=1pt]
  \item encode all blocks of all trajectories in one batch;
  \item run the D1 write chain block by block;
  \item run all queries forward and backward in one batch, giving the gradient of each block's patch;
  \item backpropagate along the write chain into the writer.
\end{enumerate}
This reaches 3,000--3,300 target tokens per second, against about 940 for the interleaved schedule. Its loss agrees with the sequential schedule to within 0.001. Gradient differences are at the level of the routing-noise floor below. Memory-aware encoding requires the sequential schedule.

\paragraph{Small patches in a BF16 model.} The backbone runs in BF16, and a patch of order $0.01\cdot\mathrm{RMS}(\Tz)$ added to $\Tz$ would mostly be rounded away. The read-out is therefore split into a BF16 projection of the base table and an FP32 projection of the patch, which are added before the injection. The empty patch read through this split branch is the baseline for every comparison; it is not bitwise equal to the original BF16 read-out.

\paragraph{Routing-noise floor.} With ten of 512 experts active per token, a patch that differs only in its last bits can switch the experts of some tokens. Between such near-identical states the writer's gradient changes by about 31.6\% relative. Schedule equivalence is judged against this floor, not bit equality.

\paragraph{Training and evaluation agree.} Two fixes made forward passes with and without gradient tracking bit-identical: fixing the router's top-$k$ to sorted order in both modes, and running two fused mixing functions eagerly. Evaluation runs without gradient tracking. With tracking on but no backward pass, the graph accumulated and memory grew from 68 to 143\,GB until evaluation stalled; without it, memory stays near 78\,GB.

\subsection{Inference with live writes}
\label{app:inference}

For generation we extend vLLM's Qwen3.8 read-out with the same FP32 patch branch. Normalisation, gating and convolution of the read-out run in FP32, and the result is cast to BF16 before it joins the residual stream; the patch is never merged into the BF16 table. Under tensor parallelism 8, one rank computes each D1 update and all ranks adopt its result. The state is updated only between requests. Every request checks the state hash before and after, and answering never modifies the state. Writing a 23,000-token dialogue takes about 12 seconds on A800 GPUs. The state of a 13,000--25,000-token dialogue is 78--145\,MB in FP32 without compression.

\subsection{Reproducibility}
\label{app:repro}

Every run is bound to the SHA-256 hashes of its source code, checkpoint and data. Exact resumption of training is allowed only when the code manifest hash matches; a code change goes through a registered migration that leaves the old sources untouched. Generations are sealed before they are scored, and key numbers were recomputed independently from raw per-item records. The DEV64 donor pairs and distractors are fixed by hash before training. The code is available at \url{https://github.com/af-74413592/delta-ngram}.